\def\eqref#1{equation~\ref{#1}}
\def\1{\bm{1}}
\DeclareMathAlphabet{\mathsfit}{\encodingdefault}{\sfdefault}{m}{sl}
\SetMathAlphabet{\mathsfit}{bold}{\encodingdefault}{\sfdefault}{bx}{n}
\def \zz {{\bm{z}}}
\newcommand{\IDop}{\operatorname{LID}}
\newcommand{\ID}{\IDop}
\newcommand{\IDstar}{\IDop^{*}}
\theoremstyle{plain}
\newtheorem{theorem}{Theorem}[section]
\theoremstyle{definition}
\newtheorem{definition}[theorem]{Definition}
\theoremstyle{remark}
\title{Geometry-Guided Adversarial Prompt \\ Detection via Curvature and Local Intrinsic Dimension}
\author{Canaan Yung \ \
Hanxun Huang \ \
Christopher Leckie \ \
Sarah Erfani \\
School of Computing and Information Systems, The University of Melbourne, Australia \\
\texttt{\{canaany\}@student.unimelb.edu.au;} \\
\texttt{\{hanxun,calecki,sarah.erfani\}@unimelb.edu.au} \\
}
\begin{document}
\maketitle

\begin{abstract}
Adversarial prompts are capable of jailbreaking frontier large language models (LLMs) and inducing undesirable behaviours,  posing a significant obstacle to their safe deployment. Current mitigation strategies primarily rely on activating built-in defence mechanisms or fine-tuning LLMs, both of which are computationally expensive and can sacrifice model utility. In contrast, detection-based approaches are more efficient and practical for deployment in real-world applications. However, the fundamental distinctions between adversarial and benign prompts remain poorly understood.  In this work, we introduce CurvaLID, a novel defence framework that efficiently detects adversarial prompts by leveraging their geometric properties. It is agnostic to the type of LLM, offering a unified detection framework across diverse adversarial prompts and LLM architectures.
CurvaLID builds on the geometric analysis of text prompts to uncover their underlying differences. We theoretically extend the concept of curvature via the Whewell equation into an $n$-dimensional word embedding space, enabling us to quantify local geometric properties, including semantic shifts and curvature in the underlying manifolds. To further enhance our solution, we leverage Local Intrinsic Dimensionality (LID) to capture complementary geometric features of text prompts within adversarial subspaces.
Our findings show that adversarial prompts exhibit distinct geometric signatures from benign prompts, enabling CurvaLID to achieve near-perfect classification and outperform state-of-the-art detectors in adversarial prompt detection. CurvaLID provides a reliable and efficient safeguard against malicious queries as a model-agnostic method that generalises across multiple LLMs and attack families.
\end{abstract}

\section{Introduction}

Frontier Large Language Models (LLMs) are widely used in real-world applications such as education, finance, and legal analysis \citep{yao2024survey}. However, adversarial prompts exploit vulnerabilities of LLMs to produce unintended and harmful responses \citep{wallace2019universal,shen2023anything,deng2023attack}. Therefore, ensuring their safety against adversarial prompts is essential to prevent harmful outputs, such as bias, misinformation, or content inciting physical harassment.

Current defences against adversarial prompts rely on prompt engineering or adversarial training. Input perturbation techniques, like Intentionanalysis \citep{zhang2024intention} and SmoothLLM \citep{robey2023smoothllm}, modify the prompt and examine whether the altered version can successfully trigger the LLM’s built-in safety mechanisms. The effectiveness of these methods depends on the degree of perturbation and robustness of the LLM’s safety alignment. Meanwhile, adversarial training approaches, which fine-tune models to resist adversarial inputs, often struggle to scale to larger LLMs. For example, Latent Adversarial Training (LAT) cannot be easily applied to LLMs exceeding 10 billion parameters \citep{sheshadri2024targeted}. Given that popular LLMs, such as GPT-3 and PaLM 2, have 175 billion and 340 billion parameters respectively, developing scalable defences for these large models is essential to ensure their reliability and safety \citep{anil2023palm,brown2020language}. 

Existing solutions are inherently tied to the internal architecture and safety alignment training of the targeted LLM, limiting their generality. They do not guarantee consistent performance across different adversarial prompts and varying models \citep{chao2023jailbreaking,shen2023anything,zhou2024mathattack}. 
In parallel, Llama Guard \citep{inan2023llama} and the constitutional classifier \citep{sharma2025constitutional} are trained to detect harmful adversarial prompts and block them from reaching the LLM. However, both approaches rely on human annotations to differentiate harmful from benign inputs. More importantly, the underlying distinctions between adversarial and benign prompts remain insufficiently understood, underscoring the need for defence strategies that are both generalizable and theoretically grounded.

In this work, we introduce CurvaLID, an LLM-agnostic framework designed to explore generalizable solutions by uncovering the fundamental geometric differences between adversarial and benign prompts. CurvaLID provides a defence mechanism that operates independently of the internal architecture of LLMs, ensuring its generality across diverse models. It enhances LLM safety by preemptively rejecting adversarial inputs. 

First, we introduce PromptLID, a sentence-level Local Intrinsic Dimensionality (LID) measure that effectively captures the geometric properties of text prompts. PromptLID calculates LID using a sentence-level-defined local neighbourhood, unlike traditional approaches \citep{ma2018characterizing,yin2024characterizing} that rely on token-level neighbourhoods, which are easily influenced by local noise and often dominated by stop words that provide limited semantic information. Prior work has applied LID to characterise adversarial subspaces in the vision domain \citep{ma2018characterizing} and to assess the truthfulness of LLM outputs using token-level estimation \citep{yin2024characterizing}, but these methods are not agnostic to specific models or LLM architectures, limiting their generalizability. PromptLID addresses these issues by offering a sentence-level, model-agnostic characterization of prompt-level geometry, enabling more robust adversarial prompt detection.

Second, we develop TextCurv, a theoretical framework for defining curvature in an $n$-dimensional Euclidean space of word embeddings. By extending the curvature concept from the Whewell equation \citep{whewell1849intrinsic}, we prove that the angle between two tangent vectors is equivalent to the difference in their tangential angles. This provides the foundation for analyzing word-level geometry, enabling us to quantify curvature in text embeddings and capture semantic shifts. 

Through extensive evaluations, we demonstrate that PromptLID effectively quantifies the high-dimensional local subspaces where adversarial prompts reside, while TextCurv captures curvature at the word level. Together, they complement each other in revealing semantic shifts and localized structural deviations in the text manifold, providing new insights into the fundamental differences between adversarial and benign prompts.

Our main contributions can be summarized as follows:
\begin{itemize}[left=0pt]
    \item We propose CurvaLID, a LLM-agnostic detection framework that leverages geometric distinctions between adversarial and benign prompts. By integrating TextCurv and PromptLID, CurvaLID efficiently and effectively detects adversarial prompts, ensuring safety across various LLMs. 
    \item We provide theoretical insights into the design of two novel geometric measures leveraged by CurvaLID. TextCurv extends the Whewell equation to $n$-dimensional Euclidean space, enabling quantification of semantic shifts through word-level curvature. PromptLID analyses the local intrinsic dimensionality across entire prompts, capturing adversarial subspaces effectively.
    \item CurvaLID successfully detects about 99\% of adversarial prompts, outperforming state-of-the-art defences by over 10\% in attack success rate reduction across multiple LLMs and adversarial attacks. It is highly time-efficient, requiring only 0.25 GPU hours of training, whereas existing adversarial training methods have significantly higher computational costs, usually exceeding 100 GPU hours.
\end{itemize}
\section{Related Work}
This section reviews prior research on adversarial attacks and defence mechanisms for LLMs, highlighting their objectives, underlying logic, and key characteristics.

\textbf{Adversarial attacks on LLM.} Adversarial attacks on LLMs involve crafted inputs designed to manipulate models into generating harmful content, like offensive language or dangerous instructions \citep{zou2023universal}. These attacks range from a single input (zero-shot) to more complex, continuous dialogue scenarios (multi-shot) \citep{shen2023anything, dong2023robust, wang2023adversarial}. This research focuses on zero-shot text prompt attacks, including techniques like text perturbation that adds gibberish or subtly alters input wording and social-engineered prompts that trick LLMs into harmful behaviour \citep{zou2023universal, schwinn2023adversarial, chu2024comprehensive}.

\textbf{Adversarial defences for LLM. }There are three primary defences against adversarial attacks on LLMs: input preprocessing, prompt engineering, and adversarial training. Input preprocessing perturbs inputs to disrupt adversarial prompts but may also affect benign ones, with effectiveness depending on perturbation level and targeted LLM \citep{cao2023defending, robey2023smoothllm, yung2024round}. Prompt engineering augments self-defensive behaviour by adding prompts to expose harmful intent, though performance varies across models \citep{phute2023llm,zhang2024intention, zhang2024parden}. Finally, adversarial training strengthens the LLM's ability to reject harmful prompts by exposing models to adversarial cases during training \citep{jain2023baseline,xu2024comprehensive}. However, this approach requires fine-tuning the LLM, making its success dependent on the specific model being protected, with most adversarial training methods confined to white-box models. Additionally, these methods often demand significant computational resources, with training times reaching up to 128 GPU hours \citep{mazeika2024harmbench} or 12 GPU hours \citep{sheshadri2024targeted}, and generally exhibit varying effectiveness across different adversarial prompts and LLMs \citep{ziegler2022adversarial, ganguli2022red,sheshadri2024targeted}. Note that our paper belongs to the field of adversarial prompt detection. Unlike existing methods such as perplexity filtering, which rely on LLMs for next-token probability, our method operates independently of the type of the LLM \citep{hu2023token}.

\section{Background and Terminology}
This section provides a brief overview of the mathematical definitions of LID and curvature.
\subsection{Local Intrinsic Dimension}
\label{LID}
Local Intrinsic Dimensionality (LID) measures the intrinsic dimensionality of the local neighbourhood around a reference sample \citep{houle2017local}. Compared to Global Intrinsic Dimension (GID) \citep{tulchinskii2024intrinsic,pope2021intrinsic}, which measures the degree of the $d$-dimension of the global manifold of a data subset, LID focuses on the local neighbourhood of given points. Thus, LID is particularly useful in analyzing high-dimensional data with varying dimensionalities across the dataset. 
\begin{definition}[Local Intrinsic Dimension (LID)]\citep{houle2017local}
LID is mathematically defined as:
\[\text{LID}_F(r) = \frac{r \cdot F'(r)}{F(r)}.\]
We are interested in a function $F$ that satisfies the conditions of a cumulative distribution function (CDF) and is continuously differentiable at $r$.
The local intrinsic dimension at $x$ is in turn defined as the limit, when the radius $r$ tends to zero:
\[\IDstar_F\triangleq\lim_{r\to 0^{+}}\ID_F(r)\, .\]
We refer to the LID
of a function $F$, or of a point $\mathbf{x}$, whose induced distance distribution has $F$ as its CDF. For simplicity, we use the term `LID' to refer to the quantity $\IDstar_{F}$.
\end{definition}
$\IDstar_F$ is the theoretical definition, and in practice, has to be estimated \citep{levina2004maximum,tempczyk2022lidl}. Estimation of LID requires a distance measure and a set of reference points to select nearest neighbors (NN). Following prior work \citep{gong2019intrinsic,ansuini2019intrinsic,pope2021the,zhou2024dda,huang2024ldreg,huang2025detecting}, we use Euclidean distance. The representation of a data point, along with the chosen reference points, significantly influences how LID is interpreted. In adversarial prompt detection, the representation and neighbourhood definition directly affect the ability to distinguish between clean and adversarial prompts.
Among existing estimators, we use the Method of Moments (MoM) \citep{amsaleg2015estimating} for its simplicity.

\subsection{Curvature}

The intuition of curvature is how quickly a curve changes direction. In geometry, we can visualise curvature through an osculating circle. Curvature can be measured at a given point by fitting a circle to the curve on which the point resides \citep{kline1998calculus}. The formal definition is as follows:
\begin{definition}\label{def:osCircle}[Curvature measured by osculating circle] \citep{kline1998calculus}
The osculating circle at a point \( P \) on a curve is the circle tangent at \( P \) and passing through nearby points on the curve. Let \( R \) be its radius. The curvature \( \kappa \) is then defined as:
\[\textstyle
\kappa = \frac{1}{R}.\]
\end{definition}
For an arbitrary curve, one can extend the concept of curvature to the rate of tangential angular change with respect to arc length, which is known as the Whewell equation \citep{whewell1849intrinsic}. The tangential angular change refers to the change of angle of inclination of the tangent at the given point. 

\begin{definition}\label{def:whewell}[Curvature by Whewell equation]\citep{whewell1849intrinsic}
Let \( s \) be the arc length and tangential angle \( \phi \) be the angle between the tangent to point \( P \) and the x-axis, for a given point \( P \) on a curve. The curvature \( \kappa \) is defined as:
\[\kappa = \frac{d \phi}{d s}.\]
\end{definition}

Furthermore, in differential geometry, the curvature can be defined as the change of the unit tangent vector with respect to arc length \citep{shifrin2015differential, o2006elementary}.
\begin{definition}\label{def:curvature}[Curvature in differential geometry]\citep{shifrin2015differential} 
Suppose curve \( \alpha \) is parametrized by arc length $s$ and \( \mathbf{T}(s) \) is the unit tangent vector to the curve. We define curvature as 
\[ \kappa(s) = \|\mathbf{T}'(s)\| = \left\|\frac{d\mathbf{T}}{ds}\right\|.\]
\end{definition}
Curvature is also defined and utilized in physics. The Frenet-Serret formulas relate curvature to torsion, tangent, normal, and binormal unit vectors \citep{frenet1852courbes}. In the Frenet-Serret formulas, the curvature describes the rotational speed along a curve, which is relevant in kinetics and trajectory applications \citep{huang2023trajectory}. It is also used in autonomous driving, robotics, and quantum computing \citep{alsing2023classical, shabana2023instantaneous,hallgarten2024stay}.

\section{Geometric analysis and CurvaLID}
\label{topoLID}

We aim to develop geometric measures that effectively characterise both benign and adversarial prompts at the prompt and word levels. These measures are then utilized for adversarial prompt detection, formulated as a binary classification task defined as follows.

Let $\mathcal{D} = \{(x^{i}, y^{i})\}_{i=1}^{n}$ be a labelled dataset comprising $n$ i.i.d. samples $x^{i}$, where each sample is associated with a label $y^{i}$. In this context, each input $x^{i}$ represents a text prompt, with the corresponding label $y^{i} \in \{0, 1\}$ indicating whether the prompt is benign ($y^{i} = 0$), or adversarial ($y^{i} = 1$).
Let $\mathcal{M}(x)$ denote the geometric measure applied to a prompt $x$, where $\mathcal{M}(x)$ is composed of two complementary components: the prompt-level measure $\text{PromptLID}(x)$ and the word-level measure $\text{TextCurv}(x)$. These measures are then utilized within an adversarial prompt detection algorithm, formalized as a classification problem where the objective is to minimize the empirical error between the ground-truth labels and the predictions:
\[\arg \min_\theta \mathbb{E}_{(x,y) \in \mathcal{D}} [\ell(h(\mathcal{M}(x)), y)],\]
where $\ell(\cdot)$ denotes the cross-entropy loss function, and $h(\mathcal{M}(x))$ is the classifier applied to the geometric measures $\mathcal{M}(x) = (\text{PromptLID}(x), \text{TextCurv}(x))$. 
Alternatively, the defender may use classical outlier detection methods without access to adversarial prompts during training. In both cases, the detector operates on the same geometric measures.
Next, we formally define PromptLID and TextCurv, detailing how they explore geometric properties at the prompt and word levels, respectively. Finally, we provide an overview of CurvaLID, our adversarial prompt detection model.

\subsection{PromptLID: LID estimation at the prompt-level}
\label{PromptLID}
To characterise the prompt-level geometric properties of benign and adversarial prompts, we propose PromptLID, an LID estimation based on prompt representations obtained from a trained CNN. We first train a model \( g \) (CNN) to perform a \( k \)-class classification task, where the goal is to determine which benign dataset a given prompt belongs to. This involves learning a function \( g: \mathcal{B} \rightarrow \mathcal{Q} \) to map the input space \( \mathcal{B} \) to the label space \( \mathcal{Q} \). The label space is defined as \( \mathcal{Q} = \{q_1, q_2, \dots, q_k\} \), where \( k \) is the number of types of benign datasets and is equal to the cardinality of \( \mathcal{Q} \). 
Given a benign prompt dataset \( \mathcal{B} = \{(b, q)^i\}_{i=1}^n \), where \( b \) is the benign prompt and \( q \) is its corresponding label, the model learns to classify each prompt into its respective dataset.
 The objective function used is categorical cross-entropy, as the task involves multi-class classification. The representation $\bm{z}_1$, derived from the penultimate dense layer, encodes the prompt as a single vector, which is then used to calculate the PromptLID. The PromptLID expands on the MoM estimation \citep{amsaleg2015estimating} of LID on the prompts' representation in $\bm{z}_1$.
\begin{definition}\label{def:promptLIDdef}[PromptLID]
The PromptLID of a prompt \( \bm{x} \) is defined as:
\[\textstyle
\mathrm{PromptLID} = -k \cdot \frac{\mu_k}{\mu_k - w^k},\]
where \( k \) is the number of nearest neighbors, \( \mu_k \) is the mean distance from the prompt representation \( \bm{z}_1 \) to its \( k \)-nearest neighbors, and \( w^k \) is the distance to the \( k \)-th neighbor.
\end{definition}
Given that adversarial prompts often manipulate the high-dimensional space of word embeddings to target rarely encountered subspaces \citep{szegedy2013intriguing,ma2018characterizing}, they exhibit distinct geometric characteristics. 
Adversarial prompts are expected to exhibit higher PromptLID as they push the embeddings into regions of the feature space that are less well-defined and more complex than typical benign inputs. 
 As shown in Section 5.2, PromptLID effectively captures this behaviour, highlighting its ability to distinguish between benign and adversarial prompts.

\subsection{TextCurv: Curvature at the word-level}
\label{TextCurv}
To characterise word-level geometric properties of benign and adversarial prompts, we analyse the curvature of word connections. The aim is to have curvature complement PromptLID by analyzing the word-level geometric properties of prompts, effectively identifying nearly all adversarial prompts. Specifically, curvature captures the relationships between words, revealing subtle semantic shifts based on word order, uncovering local geometric differences between benign and adversarial prompts.

Prior work shows that CNN activation creates a curved manifold, evidenced by the significantly higher intrinsic dimension estimated by Principal Component analysis (PCA) compared to the GID estimated by TwoNN on activation data \citep{abdi2010principal,facco2017estimating,ansuini2019intrinsic}. However, curvature differences between benign and adversarial prompts remain unexplored. Thus, we examine these differences in convolution layers as potential classification features.

Our goal is to establish a definition of text curvature based on existing mathematical definitions, with the curvature capturing semantic shifts according to word sequence and the strength of these shifts. Word order plays a crucial role in semantic analysis, helping to accurately capture the local geometric properties of prompts. We focus on the representations of prompts in the convolutional layers of the model $g$ as mentioned in Section \ref{PromptLID}, where the prompt data remains unflattened and in stacked lists of vectors at this stage, which can be viewed as word-level representation. Specifically, we extract the representations $\bm{z}_2$ and $\bm{z}_3$ from these convolutional layers for further analysis. This stage is critical, as it is where feature spaces are curved, according to prior research \citep{ansuini2019intrinsic}.

To capture semantic shifts between consecutive words in a prompt, we draw on Whewell's equation, where the rate of directional change of a curve is represented by the tangential angular change. In NLP, this angular change is connected to the dot-product formula and cosine similarity, which indicate the semantic similarity or difference between two words \citep{mikolov2013efficient, levy2015improving}. We assume that this theory also applies to modern word embeddings like GPT-2 and RoBERTa, and therefore, we define the rate of angular change in text curvature accordingly.

\begin{figure*}[t]
\centering
\vspace{-0.2in}
\includegraphics[width=0.8\textwidth]{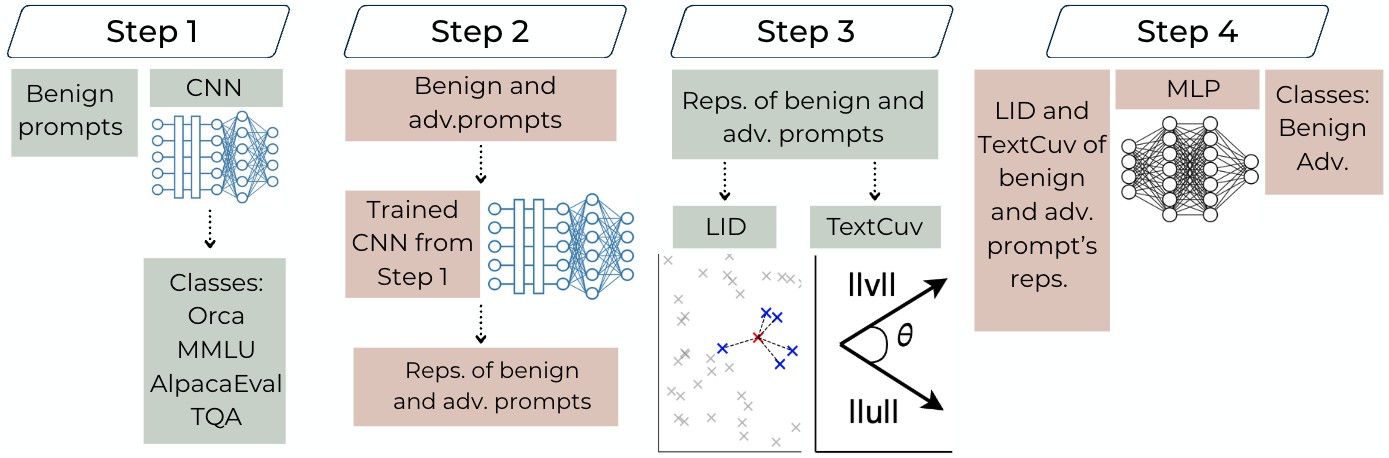}
\caption{Illustrative diagram of CurvaLID, which classifies benign and adversarial prompts using PromptLID and TextCurv.} 
\vspace{-1em}
\label{fig:cnn_diagram}
\end{figure*}
\noindent
\begin{definition}\label{def:rateofangle}[Text Curvature: Rate of angular change]
For any two consecutive word embeddings, denoted by \(\vec{u}\) and \(\vec{v}\), the rate of angular change, \(d\theta\), is defined as:
\[d\theta = \arccos\left(\frac{\vec{u} \cdot \vec{v}}{\|\vec{u}\| \|\vec{v}\|}\right).\]
\end{definition}
However, the rate of angular change alone does not fully capture the semantic shift between words, as it overlooks the magnitude of the shift. In differential geometry, curvature is defined by the rate of change in the tangent vector's direction relative to the change in arc length. When two curves exhibit the same directional change, the curve achieving this change over a shorter arc length has a higher curvature. Similarly, in text curvature, given the same semantic shift as measured by our rate of angular change, the curvature should increase when the semantic change is more substantial.

We focus on word vector magnitudes to capture the degree of semantic shift. Previous research suggests that magnitude reflects the semantic weight carried by each word and the tokenizers' understanding of that word within context \citep{schakel2015measuring, reif2019visualizing}. For example, common words tend to have smaller magnitudes due to their frequent use and limited semantic significance \citep{schakel2015measuring}. Instead of summing vector norms to measure distance changes in curvature, which may seem intuitive and consistent with geometric principles, we sum the inverses of the vector norms. This approach is driven by the hypothesis that larger vector norms signify greater semantic importance, meaning that curvature should be inversely proportional to vector norms, capturing larger semantic shifts between words.

\begin{definition}
\label{def:textCurv}[Text Curvature]
For any two consecutive word embeddings, denoted by \(\vec{u}\) and \(\vec{v}\), the text curvature, denoted by \textit{TextCurv}, is defined as:
\[\textit{TextCurv} = \frac{d\theta}{\frac{1}{\|\vec{u}\|} + \frac{1}{\|\vec{v}\|}}.\]
\end{definition}
The rate of angular change in TextCurv is supported by Theorem \ref{thm4}, which links $\theta$ to difference in tangential angles. analysis of how word embedding norms relate to arc length is in Appendix~\ref{MathsProof}.
\begin{theorem}
\label{thm4}
For two tangent vectors $\vec{u}$ and $\vec{v}$ in an $n$-dimensional Euclidean space, the angle $\theta$ between them is equivalent to the difference in their tangential angles. 
\end{theorem}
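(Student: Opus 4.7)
The plan is to reduce the $n$-dimensional statement to the planar case by working inside the $2$-dimensional subspace spanned by the two vectors, so that the familiar 2D Whewell picture applies verbatim. First I would dispose of the degenerate case where $\vec{u}$ and $\vec{v}$ are collinear (the angle between them is $0$ or $\pi$, and both tangential angles coincide or differ by $\pi$, so the claim is immediate). Then I would assume $\vec{u}, \vec{v}$ are linearly independent and let $\Pi = \mathrm{span}(\vec{u}, \vec{v}) \subset \mathbb{R}^n$, which is a $2$-dimensional Euclidean plane with the induced inner product.

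Next I would fix an orthonormal basis $\{\vec{e}_1, \vec{e}_2\}$ of $\Pi$; here $\vec{e}_1$ plays the role of the ``$x$-axis'' that appears in Definition~\ref{def:whewell}, giving a well-defined notion of tangential angle inside $\Pi$. Writing $\vec{u} = \|\vec{u}\|(\cos\phi_u\,\vec{e}_1 + \sin\phi_u\,\vec{e}_2)$ and $\vec{v} = \|\vec{v}\|(\cos\phi_v\,\vec{e}_1 + \sin\phi_v\,\vec{e}_2)$ identifies $\phi_u$ and $\phi_v$ as the tangential angles of $\vec{u}$ and $\vec{v}$ respectively.

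Computing the inner product in $\mathbb{R}^n$ (which agrees with the inner product in $\Pi$) gives
\[
\vec{u}\cdot\vec{v} \;=\; \|\vec{u}\|\,\|\vec{v}\|\bigl(\cos\phi_u\cos\phi_v + \sin\phi_u\sin\phi_v\bigr) \;=\; \|\vec{u}\|\,\|\vec{v}\|\cos(\phi_v - \phi_u).
\]
Comparing with the standard identity $\cos\theta = \frac{\vec{u}\cdot\vec{v}}{\|\vec{u}\|\,\|\vec{v}\|}$ yields $\cos\theta = \cos(\phi_v - \phi_u)$, and since both $\theta$ and the tangential angle difference can be taken in $[0,\pi]$, we conclude $\theta = |\phi_v - \phi_u|$, which is exactly Definition~\ref{def:rateofangle} and matches the 2D Whewell interpretation.

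The main obstacle is conceptual rather than computational: the Whewell definition of ``tangential angle'' requires a reference axis, which is unique in $\mathbb{R}^2$ but ambiguous in $\mathbb{R}^n$. I would address this by emphasising that although each individual $\phi_u$, $\phi_v$ depends on the choice of the basis $\{\vec{e}_1,\vec{e}_2\}$ of $\Pi$ (and on an orientation), their \emph{difference} $\phi_v - \phi_u$, and hence the claimed equivalence with $\theta$, is invariant under rotations of that basis. This invariance is what legitimises extending the Whewell notion of tangential angular change from planar curves to the $n$-dimensional word-embedding setting used to define \textit{TextCurv}.
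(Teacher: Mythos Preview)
Your proposal is correct and follows essentially the same route as the paper: both reduce to the $2$-dimensional subspace spanned by $\vec{u}$ and $\vec{v}$, pick an orthonormal basis there, and read off the tangential angles in that plane. The only cosmetic differences are that the paper fixes $\vec{e}_1=\vec{u}/\|\vec{u}\|$ via Gram--Schmidt (forcing $\phi_u=0$) and then reaches the conclusion through an explicit $\cos\theta$, $\sin\theta$, $\tan\theta$ computation, whereas you keep a general basis and use the cosine addition formula directly; your extra remarks on the collinear case and on basis-invariance of $\phi_v-\phi_u$ are not in the paper but are welcome clarifications.
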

\begin{proof}[Sketch of Proof]
We apply the Gram–Schmidt process to form an orthonormal basis for the tangent space, expressing $\vec{u}$ and $\vec{v}$ as linear combinations of its vectors. In this basis, the angle $\theta$ follows from their inner product. We compute the tangential angles of $\vec{u}$ and $\vec{v}$ from their projections, and by subtracting them show the difference equals $\theta$. The full proof is in Appendix~\ref{MathsProof}.
\end{proof}
TextCurv captures subtle word-level geometric shifts when adversarial modifications alter a prompt's semantic structure. Adversarial prompts often cause larger and more erratic curvature shifts as they introduce perturbations that disrupt the normal flow of meaning. 
By analyzing these shifts, TextCurv helps us identify adversarial inputs that deviate from the expected smoothness of benign prompts. 

\subsection{CurvaLID: Adversarial Prompt Classification by Curvature and LID}
\label{curvaLIDDEF}
CurvaLID is an adversarial prompt detection method that filters out adversarial prompts before they are input to LLMs, ensuring their safety. Since CurvaLID operates independently of LLMs, it provides a unified defensive performance across all LLMs. This differentiates it from existing SOTA defences like input perturbation, prompt engineering, and adversarial training, which show varying performance across different adversarial prompts and LLMs. Moreover, CurvaLID's evaluation is straightforward and standardised, avoiding the need for subjective human assessments or reliance on LLM judgments, which can raise robustness concerns \citep{chen2024humans, raina2024llm}.

CurvaLID involves four steps (see Figure \ref{fig:cnn_diagram}, pseudo code in Appendix \ref{pseudoCode}). In Step 1, we use our trained model $g$ (defined in Section \ref{PromptLID}) to classify different types of benign prompts, deriving the normal feature manifold. This is essential for amplifying the geometric and dimensional distinctions between benign and adversarial prompts. In Step 2, we extract the representations of benign and adversarial prompts from $\bm{z}_1$, $\bm{z}_2$, and $\bm{z}_3$. In Step 3, we compute the PromptLID and TextCurv of each prompt using the representations from Step 2, capturing both sentence-level dimensionality and word-level curvature. In Step 4, we train a Multilayer Perceptron (MLP) to classify benign and adversarial prompts based on the two mean TextCurv values and the PromptLID. The MLP performs binary classification and filters out adversarial prompts before they reach the LLM.
\section{Experiments}
\label{experiment}
In our evaluation, we assessed both the reduction in attack success rate and the prompt classification accuracy of CurvaLID across a range of LLMs (Vicuna-7B-v1.1~\citep{chiang2023vicuna}, LLaMA2-7B-Chat~\citep{touvron2023llama}, GPT-3.5~\citep{brown2020language}, PaLM2~\citep{anil2023palm}, and Gemma-2-9B~\citep{team2024gemma}), and compared it against SOTA defences (SmoothLLM \citep{robey2023smoothllm}, Self-Reminder \citep{xie2023defending}, Intentionanalysis \citep{zhang2024intention}, In-Context Demonstration (ICD) \citep{wei2023jailbreak}, RTT3d \citep{yung2024round}, and constrained SFT~\citep{qi2025safety}). Our test set has 3,540 prompts, comprising 1,200 benign and 2,340 adversarial prompts. For benign data, we randomly sampled 300 from each of the Orca \citep{OpenOrca}, MMLU \citep{hendrycks2020measuring}, AlpacaEval \citep{alpaca_eval}, and TruthfulQA (TQA) datasets \citep{lin2021truthfulqa}. The model $g$ is trained to classify these four benign datasets. For adversarial data, approximately 300 were randomly sampled from each of SAP \citep{deng2023attack}, DAN (also known as the "In The Wild" dataset) \citep{shen2023anything}, MathAttack \citep{zhou2024mathattack}, and GCG \citep{zou2023universal}, while around 200 prompts were randomly selected from PAIR \citep{chao2023jailbreaking}, RandomSearch \citep{andriushchenko2024jailbreaking}, AmpleGCG \citep{liao2024amplegcg}, Persuasive Attack \citep{zeng2024johnny}, AutoDAN \citep{liu2023autodan}, and DrAttack \citep{li2024drattack}. Dataset details are in Appendix \ref{resultsetting}.
Unless specified otherwise, we use an 80/20 train–test split. CNN/MLP hyperparameters are in Appendix~\ref{CNNconfig} and \ref{ArchMLP}. Results are averaged over 10 runs for reliability, and ablations are reported in Appendix~\ref{ablationAppendix2}.
\subsection{Main Results}
\label{mainresult6_1}
\begin{table}[t]
\vspace{-0.2in}
\caption{We compare CurvaLID with SOTA defences. The best results are \textbf{boldfaced}. Results for other LLMs and defences are provided in Appendix~\ref{sec:baseline_experiments}.}
\centering
\begin{adjustbox}{width=0.9\linewidth}
\begin{tabular}{@{}ccccccccc@{}}
\toprule
\textbf{LLM} & \textbf{defence} & \textbf{GCG} & \textbf{PAIR} & \textbf{DAN} & \textbf{AmpleGCG} & \textbf{SAP} & \textbf{MathAttack} & \textbf{RandomSearch} \\ \midrule

\multirow{7}{*}{\textbf{Vicuna-7B}} 
 & No defence & 86.0 & 98.0 & 44.5 & 98.0 & 69.0 & 24.0 & 94.0 \\ \cmidrule{2-9}
 & SmoothLLM & 5.5 & 52.0 & 13.0 & 4.2 & 44.6 & 22.0 & 48.5 \\
 & Self-Reminder & 9.5 & 48.0 & 35.5 & 11.5 & 25.2 & 22.0 & 6.0 \\
 & Intentionanalysis & \textbf{0.0} & 8.5 & 3.3 & \textbf{0.3} & 0.23 & 20.0 & \textbf{0.0} \\
 & ICD & 0.2 & 5.2 & 40.4 & 0.9 & 32.8 & 22.0 & 0.2 \\
 & RTT3d & 0.2 & 0.3 & 22.0 & 3.5 & 33.5 & 20.2 & 2.5 \\
 & \cellcolor[gray]{0.85}\textbf{CurvaLID} 
                    & \cellcolor[gray]{0.85}\textbf{0.0} 
                    & \cellcolor[gray]{0.85}\textbf{0.0} 
                    & \cellcolor[gray]{0.85}\textbf{0.0}
                    & \cellcolor[gray]{0.85}1.1 
                    & \cellcolor[gray]{0.85}\textbf{0.0} 
                    & \cellcolor[gray]{0.85}\textbf{0.0} 
                    & \cellcolor[gray]{0.85}\textbf{0.0} \\ \midrule

\multirow{7}{*}{\textbf{LLaMA2-7B}} 
 & No defence & 12.5 & 19.0 & 2.0 & 81.0 & 9.5 & 11.7 & 90.0 \\ \cmidrule{2-9}
 & SmoothLLM & \textbf{0.0} & 11.0 & 0.2 & 0.2 & 1.2 & 11.2 & \textbf{0.0} \\
 & Self-Reminder & \textbf{0.0} & 8.0 & 0.3 & \textbf{0.0} &\textbf{0.0} & 11.1 & \textbf{0.0} \\
 & Intentionanalysis & \textbf{0.0} & 5.8 & 0.7 & \textbf{0.0} & \textbf{0.0} & 11.2 & \textbf{0.0} \\
 & ICD & \textbf{0.0} & 2.7 & 0.8 & \textbf{0.0} & \textbf{0.0} & 10.8 & \textbf{0.0} \\
 & RTT3d & 0.2 & 0.2 & 1.8 & 0.4 & 5.5 & 9.8 & 0.8 \\
 & \cellcolor[gray]{0.85}\textbf{CurvaLID} 
                    & \cellcolor[gray]{0.85}\textbf{0.0} 
                    & \cellcolor[gray]{0.85}\textbf{0.0} 
                    & \cellcolor[gray]{0.85}\textbf{0.0} 
                    & \cellcolor[gray]{0.85}\textbf{0.0} 
                    & \cellcolor[gray]{0.85}\textbf{0.0}
                    & \cellcolor[gray]{0.85}\textbf{0.0} 
                    & \cellcolor[gray]{0.85}\textbf{0.0} \\ \midrule

\multirow{7}{*}{\textbf{PaLM2}} 
 & No defence & 14.9 & 98.0 & 49.7 & 88.9 & 55.1 & 18.9 & 91.9 \\ \cmidrule{2-9}
 & SmoothLLM & 5.5 & 38.7 & 6.7 & 7.2 & 41.2 & 9.8 & 45.3 \\
 & Self-Reminder & 2.3 & 36.7 & 22.3 & 4.7 & 21.4 & 13.3 & 3.7 \\
 & Intentionanalysis &\textbf{0.0}& 2.3 & 1.3 & 0.9 &\textbf{0.0}& 9.7 &\textbf{0.0}\\
 & ICD & 0.1 & 4.9 & 34.2 & 0.2 & 33.9 & 9.3 &\textbf{0.0}\\
 & RTT3d & 0.1 & 0.1 & 25.5 & 3.3 & 25.0 & 10.2 & 2.8 \\
 & \cellcolor[gray]{0.85}\textbf{CurvaLID} 
                    & \cellcolor[gray]{0.85}\textbf{0.0}
                    & \cellcolor[gray]{0.85}\textbf{0.0}
                    & \cellcolor[gray]{0.85}\textbf{0.0}
                    & \cellcolor[gray]{0.85}\textbf{0.0} 
                    & \cellcolor[gray]{0.85}\textbf{0.0}
                    & \cellcolor[gray]{0.85}\textbf{0.0}
                    & \cellcolor[gray]{0.85}\textbf{0.0}\\ 
\bottomrule
\end{tabular}
\end{adjustbox}
\vspace{-0.1in}
\label{tab:curvalid_scaled_down}
\end{table}

\begin{table*}[t!]
\centering
\begin{minipage}[t]{0.47\linewidth}
\centering
\Large
\renewcommand{\arraystretch}{1.5}
\caption{Performance metrics for CurvaLID on benign and adversarial datasets. Standard deviations are shown in parentheses.}
\label{mainresult}
\vspace{1mm}
\begin{adjustbox}{width=0.95\linewidth}
\begin{tabular}{@{}c|ccccc|c|c@{}}
\toprule
\textbf{Class}         & \multicolumn{4}{c|}{\textbf{Accuracy by Dataset}}      & \textbf{Class Acc.}   & \textbf{Overall Acc.} & \textbf{F1}           \\ \midrule
\multirow{2}{*}{\textbf{Benign}} &
  \textbf{Orca} &
  \textbf{MMLU} &
  \textbf{AlpEval} &
  \multicolumn{1}{c|}{\textbf{TQA}} &
  \multirow{2}{*}{0.984} &
  \multirow{4}{*}{0.992} &
  \multirow{4}{*}{0.992} \\
\cmidrule(r){2-5}
& 0.968 & 1.000 & 0.983 & \multicolumn{1}{c|}{0.986} & & & \\
  & (0.012) & (0.000) & (0.008) & \multicolumn{1}{c|}{(0.010)} & (0.009) & & \\ \cmidrule(r){1-6}

\multirow{2}{*}{\textbf{Adv.}} &
  \textbf{SAP} &
  \textbf{DAN} &
  \textbf{MathAtk} &
  \multicolumn{1}{c|}{\textbf{GCG}} &
  \multirow{2}{*}{1.000} &
  &
  \\
\cmidrule(r){2-5}
& 1.000 & 1.000 & 1.000 & \multicolumn{1}{c|}{1.000} & & & \\
  & (0.000) & (0.000) & (0.000) & \multicolumn{1}{c|}{(0.000)} & (0.000) & & \\
\bottomrule
\end{tabular}
\end{adjustbox}
\renewcommand{\arraystretch}{1.0}
\end{minipage}
\begin{minipage}[t]{0.48\linewidth}
\centering
\normalsize
\caption{Average TextCurv of benign and adversarial datasets across word embedding and CNN layers. Percentage in parentheses shows the increase in TextCurv for adversarial prompts.}
\label{tab:comparison}
\vspace{1mm}
\begin{adjustbox}{width=\linewidth}
\begin{tabular}{@{}lcccc@{}}
\toprule
\textbf{Word Embedding} & \multicolumn{2}{c}{\textbf{Conv Layer 1}} & \multicolumn{2}{c}{\textbf{Conv Layer 2}} \\
\cmidrule(lr){2-3} \cmidrule(lr){4-5}
& \textbf{Benign} & \textbf{Adv.} & \textbf{Benign} & \textbf{Adv.} \\
\midrule
RoBERTa    & 0.626 & 0.881 (+40.7\%) & 0.325 & 0.446 (+37.2\%) \\
GPT-2      & 0.805 & 1.11 (+37.9\%)  & 0.389 & 0.546 (+40.4\%) \\
BERT       & 0.428 & 0.590 (+37.9\%) & 0.199 & 0.264 (+32.7\%) \\
XLNet      & 0.296 & 0.431 (+45.6\%) & 0.199 & 0.264 (+32.7\%) \\
DistilBERT & 0.386 & 0.557 (+44.3\%) & 0.225 & 0.322 (+43.1\%) \\
\bottomrule
\end{tabular}
\end{adjustbox}
\end{minipage}
\vspace{-1em}
\end{table*}

\begin{figure*}[ht!]
    \centering
    \vspace{-0.1in}
    \scriptsize
    \captionsetup{aboveskip=2pt, belowskip=-6pt}
    \subfigure[Confusion Matrix]{
        \includegraphics[width=.215\linewidth]{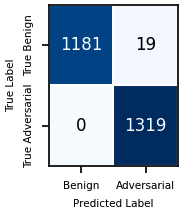} 
        \label{fig:confusion_matrix}
    }\hspace{0.5em}
    \subfigure[Avg token-level LID]{%
        \includegraphics[width=.23\linewidth]{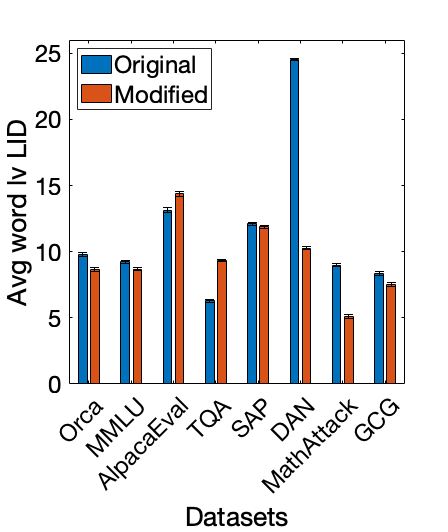}
        \label{fig:LID_SD}
    }\hspace{0.5em}
    \subfigure[Average NN-distances]{%
        \includegraphics[width=.22\linewidth]{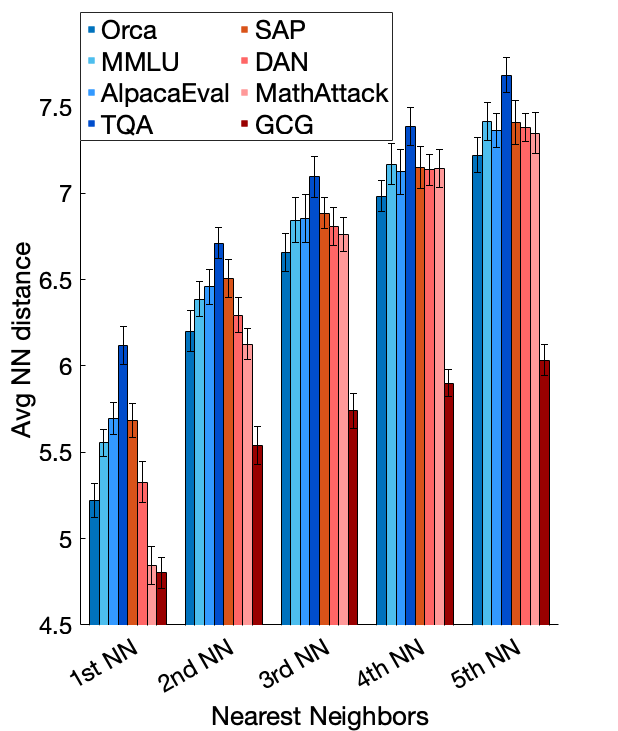}
        \label{fig:KNNdist}
    }\hspace{0.5em}
    \subfigure[Average PromptLID]{%
        \includegraphics[width=.23\linewidth]{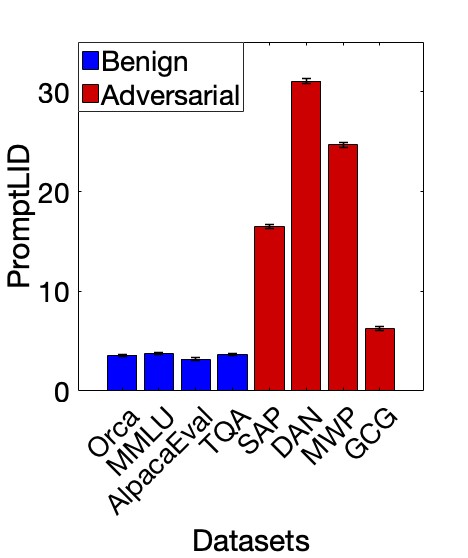}
        \label{fig:LID_comp}
    }
    \caption{(a) Confusion matrix from CurvaLID on English adversarial prompts (corresponds to Table~\ref{mainresult}).
    (b) Average token-level LID for benign and adversarial prompts. Blue bars show LID for original prompts; red bars show LID after removing stopwords and punctuation.
    (c) Average nearest neighbor distances, blue for benign and red for adversarial prompts.
    (d) Average PromptLID for benign and adversarial prompts. Error bars in (b)–(d) show standard deviation over 10 runs.}
    \vspace{-1em}
    \label{fig:combined_horizontal_with_conf}
\end{figure*}

\textbf{Comparison of CurvaLID against baseline defences.} We compared CurvaLID with five existing defences: SmoothLLM, Self-Reminder, Intentionanalysis, In-Context Demonstration (ICD), and RTT3d. Evaluations are conducted across four LLMs: Vicuna-7B-v1.1, LLaMA2-7B-Chat, GPT-3.5, and PaLM2, and against seven adversarial attacks, including GCG, PAIR, DAN, AmpleGCG, SAP, MathAttack, and RandomSearch. In addition, we compared CurvaLID with the constrained SFT defence~\citep{qi2025safety}, using the fine-tuned Gemma-2-9B model released by~\citet{qi2025safety}. In our approach, if a prompt is classified as a jailbreak prompt, it is rejected.
The comparison against SOTA defences across multiple LLMs, measured by ASR (\%), is presented in Table~\ref{tab:curvalid_scaled_down}. Appendix~\ref{sec:baseline_experiments} provides detailed experimental results and covers filters like Circuit Breakers \citep{zou2024improving}, Llama Guard \citep{inan2023llama}, and perplexity filtering \citep{alon2023detecting}. Experimental result shows that CurvaLID effectively identifies adversarial prompts and rejects them before querying the LLM.
Notably, CurvaLID is model-agnostic and performs consistently across LLMs, achieving superior results over baseline defences in most scenarios. 

\textbf{CurvaLID on adversarial prompts.} The experimental results in Table \ref{mainresult} demonstrate CurvaLID's high performance. The model achieved an overall accuracy of 0.992, with perfect accuracy of 1.00 (i.e., 100\%) in identifying adversarial prompts and 0.984 accuracy in identifying benign prompts. Since adversarial prompts are detected before reaching the LLM, 1.00 accuracy implies a 0\% attack success rate, effectively nullifying adversarial attempts. The corresponding confusion matrix is shown in Figure~\ref{fig:confusion_matrix}. Notably, CurvaLID remains robust even when the number of prompts per dataset is halved to 150, achieving an accuracy and F1 score of 0.988 (see Appendix~\ref{lessdataAccuracy}).

In addition to the four main adversarial datasets, which contain a substantial number of prompts, we extended our experiments to PAIR, RandomSearch, AmpleGCG, Persuasive Attack, AutoDAN, DrAttack, and persona modulation attack. CurvaLID identified all adversarial attacks with near 0.99 accuracy (see Appendix~\ref{otherResults} and \ref{PMEva}). It also achieved over 0.9 accuracy on demonstration-based attacks, including In-Context Demonstration \citep{wei2023jailbreak} and cipher-based attacks \citep{yuan2023gpt} (see Appendix~\ref{demoAttack}), and 0.994 accuracy on adversarial prompts written in nine non-English languages (see Appendix~\ref{nonEng}).
We also evaluated CurvaLID on benchmarks, namely HarmBench adversarial prompts \citep{mazeika2024harmbench}, achieving near-zero ASR, and over-refusal benchmarks \citep{cui2024or,rottger2023xstest}, where CurvaLID reduced harmful prompt acceptance by up to 30\% (see Appendix~\ref{HBEval}- \ref{app:or_xstest}).

\subsection{analysis on CurvaLID}

We analyse CurvaLID under three settings: (1) training only on benign prompts, (2) comparing token-level LID with PromptLID, and (3) testing TextCurv across word embeddings.

\textbf{Using CurvaLID in one-class classification problems.} We modified step 4 of CurvaLID by replacing the supervised MLP with unsupervised outlier detection methods such as the local outlier factor (LOF) \citep{breunig2000lof} or isolation forest \citep{liu2008isolation}, which do not require training on adversarial prompts. Despite the absence of adversarial examples during training, LOF and isolation forest methods achieved comparable detection accuracy of around 0.9. Therefore, our framework is task-independent and can be applied to various problem settings. Full results are in Appendix \ref{LOF}.

\textbf{Limitations of token-level LID.} We provide the motivation for PromptLID by examining why traditional token-level LID fails to effectively distinguish benign from adversarial prompts. Adversarial inputs often manipulate rarely encountered regions of the feature space using complex words and irregular combinations \citep{wallace2019universal, ilyas2019adversarial, ren2019generating}. Prior work has shown that such perturbations lead representations into subspaces with distinct local dimensional properties, typically exhibiting high LID \citep{ma2018characterizing}. Based on this, we hypothesize that each word in an adversarial prompt may induce a representation with elevated LID, and that by aggregating these token-level values (e.g., via averaging), it might be possible to detect adversarial prompts.

However, our analysis reveals that this token-level approach, computed with RoBERTa embeddings and treating each word as a data point within its prompt-based neighbourhood, is ineffective at separating benign and adversarial inputs. As shown in Figures~\ref{fig:LID_SD} and Appendix~\ref{SDLID}, average token-level LID across datasets centers around 10 with high variability, causing substantial overlap between benign and adversarial classes. To investigate further, we analysed the first five nearest-neighbor distances, which also showed minimal differences between prompt types (Figure~\ref{fig:KNNdist}). Appendix~\ref{top10com} indicates the most common neighbors are stop words and punctuation, suggesting token-level LID is dominated by non-informative tokens and insensitive to sequential structure. 
Removing stop words and punctuation lowered the standard deviation to 2.26 (Appendix~\ref{SDLID}), but the distinction between benign and adversarial prompts remained weak.
These results reaffirm the limitations of token-level LID for adversarial detection and further motivate using PromptLID.

We analyse PromptLID across benign and adversarial datasets. As shown in Figure \ref{fig:LID_comp}, adversarial prompts exhibit a much higher average PromptLID compared to benign prompts, highlighting its effectiveness in distinguishing adversarial prompts. The distribution of PromptLID between benign and adversarial prompts can be found in Appendix \ref{histogramDistribution}.

\textbf{Generalisation to different word embedding.}
We analysed the average TextCurv of adversarial prompts in CurvaLID Step 2. To ensure TextCurv's independence from the embedding model, we conducted curvature analysis using GPT-2 \citep{radford2019language}, BERT \citep{devlin2018bert}, XLNet \citep{yang2019xlnet}, and DistilBERT \citep{sanh2019distilbert}. As shown in Table \ref{tab:comparison}, adversarial prompts consistently exhibited at least 30\% higher curvature than benign prompts across all embeddings. The TextCurv distribution for benign and adversarial prompts are in Appendix \ref{histogramDistribution}. 
These findings support our hypothesis that words in adversarial prompts exhibit greater irregularity and complexity compared to those in benign prompts. More importantly, the results generalize across different embedding models, suggesting that adversarial and benign prompts differ fundamentally in their geometric properties.

We also demonstrated that CNN activation significantly amplifies TextCurv differences between benign and adversarial prompts. The mean TextCurv of adversarial prompts is at least 30\% higher than that of benign prompts in both CNN layers. In contrast, when using only the word embedding, the mean TextCurv is 4.91 for benign prompts and 5.42 for adversarial prompts, a much smaller difference of 13\%, less than half of that observed in the CNN layers (see Appendix \ref{cnnTextCurva}). 

\section{Conclusion}
\label{limitations}
In this paper, we introduce CurvaLID, an adversarial prompt detection framework that filters out adversarial prompts before reaching LLMs to maintain their security. CurvaLID operates independently of LLMs, providing consistent performance across models. It achieves over 0.99 accuracy and reduces the ASR of tested adversarial prompts to near zero. CurvaLID leverages PromptLID and TextCurv, which analyse the geometric properties of prompts at the prompt and word level, respectively. These measures address limitations of word-level LID caused by stop words and punctuation, forming the foundation for CurvaLID’s robust performance in distinguishing benign and adversarial prompts. Future work includes evaluating CurvaLID on benign prompts in other languages to strengthen multilingual robustness and ensure fairness in low-resource settings.

\section*{Reproducibility Statement}

The algorithm outline of CurvaLID is presented in Section \ref{curvaLIDDEF}, with corresponding pseudo code in Appendix \ref{pseudoCode}. The definitions and methodologies of PromptLID and TextCurv are described in Sections \ref{PromptLID} and \ref{TextCurv}, respectively. Theoretical results and proofs are provided in Section \ref{topoLID} and Appendix \ref{MathsProof}. Experimental details are given in the first paragraph of Section \ref{experiment} and Appendix \ref{Appendix2}, with dataset descriptions in Appendix \ref{resultsetting}. Model parameters and architectural configurations are reported in Appendices \ref{CNNconfig} and \ref{ArchMLP}. The algorithm used in this research is included in the supplementary material, and all data used will be released upon publication.


\bibliography{iclr2026_conference}
\bibliographystyle{iclr2026_conference}

\clearpage
\appendix

\section{Pseudocode and theoretical proofs for CurvaLID}
\label{Appendix1}


This section includes the pseudocode and supplementary theoretical proofs for CurvaLID.

\subsection{Pseudocode for CurvaLID}
\label{pseudoCode}

Algorithm \ref{pseudoCL} presents the pseudocode for CurvaLID.

\begin{algorithm}[h]
\caption{CurvaLID}
\label{pseudoCL}
\textbf{Input:} Datasets $\mathcal{D}_b$, $\mathcal{D}_a$ (benign and adversarial prompts)

\textbf{Step 1: Data Preparation}

- Load datasets $\mathcal{D}_b$ and $\mathcal{D}_a$.

- Compute word embeddings $E_b$ and $E_a$ for $\mathcal{D}_b$ and $\mathcal{D}_a$.

\textbf{Step 2: Preprocessing}

- Pad sequences to uniform length $L_{\text{max}}$.

- Standardize embeddings to zero mean and unit variance.

\textbf{Step 3: Train CNN for Benign Classification}

- Train a CNN $\mathcal{G}$ on $\mathcal{D}_b$ to extract prompt-level representations $\zz_1$.

\textbf{Step 4: Compute PromptLID and TextCurv}

- Calculate $\text{PromptLID}$ on $\zz_1$.

- Extract intermediate layer outputs $\zz_2$, $\zz_3$, and calculate TextCurv.

\textbf{Step 5: Train the Detection Model}

- Combine $\text{PromptLID}$ and TextCurv as features.

- Train an MLP $\mathcal{H}$ for binary classification of benign vs adversarial prompts.
\end{algorithm}

\subsection{Theoretical foundations and mathematical justification of TextCurv}
\label{MathsProof}

We begin by addressing the reference of \textit{TextCurv} to Whewell's equation, specifically focusing on how embedding angles relate to tangential angles. Our objective is to demonstrate that the angle between two word embedding vectors corresponds to the difference in their tangential angles (i.e., the numerator in Whewell's equation). To this end, we prove that for two tangent vectors, $\vec{u}$ and $\vec{v}$, in $n$-dimensional Euclidean space, the angle $\theta$ between them is equivalent to the difference in their tangential angles. The following proof establishes this equivalence, showing that the angular difference between two vectors directly corresponds to the difference in their tangential angles.

\textbf{Theorem}
For two tangent vectors $\vec{u}$ and $\vec{v}$ in $n$-dimensional Euclidean space, the angle $\theta$ between them is equivalent to the difference in their tangential angles.

\textbf{Proof of Theorem}

\textbf{Step 1: Angle in $n$-Dimensional Space}

The angle $\theta$ between two vectors $\vec{u}, \vec{v} \in \mathbb{R}^n$ is defined as:
\[
\cos\theta = \frac{\vec{u} \cdot \vec{v}}{\|\vec{u}\| \|\vec{v}\|},
\]

\textbf{Step 2: Tangential Angles and Plane Reduction}

\begin{itemize}
  \item \textbf{Tangential Angles}: Tangential angles describe the orientation of vectors within the specific 2D plane they span. These are defined relative to a chosen reference axis in that plane.
  \item \textbf{Plane Spanned by $\vec{u}$ and $\vec{v}$}: Any two vectors in $n$-dimensional space span a \textbf{2D subspace} (a plane). This means the interaction between $\vec{u}$ and $\vec{v}$ (e.g., the angle $\theta$) is fully determined by their projections into this plane.
  \item \textbf{Orthonormal Basis for the Plane}: Using the Gram-Schmidt process, construct an orthonormal basis $\{\vec{e}_1, \vec{e}_2\}$:
  
   - Normalize $\vec{u}$ to define $\vec{e}_1$:
     \[
     \vec{e}_1 = \frac{\vec{u}}{\|\vec{u}\|}.
     \]
   - Define $\vec{e}_2$ as orthogonal to $\vec{e}_1$ and lying in the same plane:
     \[
     \vec{e}_2 = \frac{\vec{v} - (\vec{v} \cdot \vec{e}_1)\vec{e}_1}{\|\vec{v} - (\vec{v} \cdot \vec{e}_1)\vec{e}_1\|}.
     \]
\end{itemize}

\textbf{Step 3: Expressing $\vec{u}$ and $\vec{v}$ in the Orthonormal Basis}

In the orthonormal basis $\{\vec{e}_1, \vec{e}_2\}$:
\[
\vec{u} = \|\vec{u}\|\vec{e}_1,
\]
and:
\[
\vec{v} = a\vec{e}_1 + b\vec{e}_2,
\]
where:
\[
a = \vec{v} \cdot \vec{e}_1, \quad b = \vec{v} \cdot \vec{e}_2.
\]

\textbf{Step 4: Computing $\cos\theta$}

The angle $\theta$ between $\vec{u}$ and $\vec{v}$ is:
\[
\cos\theta = \frac{\vec{u} \cdot \vec{v}}{\|\vec{u}\| \|\vec{v}\|}.
\]
Substituting:
\[
\vec{u} \cdot \vec{v} = \|\vec{u}\| a, \quad \|\vec{v}\| = \sqrt{a^2 + b^2}.
\]
Thus:
\[
\cos\theta = \frac{a}{\sqrt{a^2 + b^2}}.
\]

\textbf{Step 5: Computing $\sin\theta$}

The magnitude of the cross product $\|\vec{u} \times \vec{v}\|$ in the 2D plane is related to $\sin\theta$ by:
\[
\|\vec{u} \times \vec{v}\| = \|\vec{u}\| \|\vec{v}\| \sin\theta.
\]
Substituting:
\[
\|\vec{u} \times \vec{v}\| = \|\vec{u}\| |b|.
\]
Thus:
\[
\sin\theta = \frac{|b|}{\sqrt{a^2 + b^2}}.
\]

\textbf{Step 6: Computing $\tan\theta$}

The tangent of $\theta$ is:
\[
\tan\theta = \frac{\sin\theta}{\cos\theta}.
\]
Substituting:
\[
\tan\theta = \frac{\frac{|b|}{\sqrt{a^2 + b^2}}}{\frac{a}{\sqrt{a^2 + b^2}}}.
\]
Simplify:
\[
\tan\theta = \frac{|b|}{a}.
\]
Thus:
\[
\theta = |\arctan(b/a)|.
\]

\textbf{Step 7: Relating $\theta$ to the Tangential Angles}

In the 2D plane:

- The tangential angle of $\vec{u}$ relative to $\vec{e}_1$ is:
  \[
  \alpha_{\vec{u}} = 0 \quad (\vec{u} \text{ lies entirely along } \vec{e}_1).
  \]
- The tangential angle of $\vec{v}$ is:
  \[
  \alpha_{\vec{v}} = \arctan(b/a).
  \]

The difference in tangential angles is:
\[
|\alpha_{\vec{u}} - \alpha_{\vec{v}}| = |\arctan(b/a)|.
\]

Thus, the geometric angle $\theta$ between $\vec{u}$ and $\vec{v}$ satisfies:
\[
\theta = |\alpha_{\vec{u}} - \alpha_{\vec{v}}|.
\]

This completes the proof. \qedsymbol{}

Now we investigate the relationship between word embedding vector norms and the change of arc length. We start with the goal of approximating the arc length \( \Delta s \) between two consecutive word embeddings, \( \vec{u} \) and \( \vec{v} \), in a high-dimensional space. Arc length is classically defined as the integral of the norm of the tangent vector along the curve. For discrete data points, this is approximated as a sum of the Euclidean distances between points.

\textbf{1. Discrete Approximation of Arc Length}

Given consecutive embeddings \( \vec{u} \) and \( \vec{v} \), the arc length between these points can be approximated as:
\[
\Delta s = \|\vec{u} - \vec{v}\|.
\]
However, directly using \( \|\vec{u} - \vec{v}\| \) would treat the embeddings purely as geometric points and ignore their semantic significance as encoded by the vector magnitudes.

\textbf{2. Semantic Weight and Embedding Norms}

In NLP, the norm of a word embedding, \( \|\vec{u}\| \), encodes the semantic ``weight" or importance of a word within its context \citep{schakel2015measuring, reif2019visualizing}. Larger norms indicate that the embedding carries more semantic information, while smaller norms suggest less significance.

For two consecutive embeddings, \( \vec{u} \) and \( \vec{v} \), their combined semantic importance is proportional to their norms:
\[
\text{Semantic Importance} \propto \|\vec{u}\| + \|\vec{v}\|.
\]

\textbf{3. Inverse Proportionality and Arc Length}

To align with the geometric principle in Whewell’s equation that relates curvature (\( \kappa \)) and arc length (\( \Delta s \)) as:
\[
\kappa \propto \frac{1}{\Delta s},
\]
we posit that arc length (\( \Delta s \)) should \textit{decrease} when the semantic importance (\( \|\vec{u}\| + \|\vec{v}\| \)) increases.

This motivates the choice of the \textit{inverse relationship}:
\[
\Delta s \propto \frac{1}{\|\vec{u}\| + \|\vec{v}\|}.
\]

\textbf{4. Sum of Inverse Norms as Arc Length}

While \( \|\vec{u}\| + \|\vec{v}\| \) represents the combined semantic importance of two embeddings, directly using it in the denominator would contradict the inverse proportionality between \( \Delta s \) and \( \kappa \). Instead, we take the \textit{inverse of the norms individually}, which ensures the arc length is smaller for larger semantic weights.

Thus, the arc length approximation becomes:
\[
\Delta s \propto \frac{1}{\|\vec{u}\|} + \frac{1}{\|\vec{v}\|}.
\]

The reasoning is that embeddings with larger norms (higher semantic significance) should have smaller contributions to the overall arc length, reflecting the sharper semantic transitions between significant words.
\clearpage
\section{Supplementary information for experiments}
\label{supInfo}
The appendix is organized into four sections. \ref{Appendix2} provides supplementary information about CurvaLID. \ref{ablationAppendix2} focuses on ablation studies, presenting experiments to analyse various aspects of CurvaLID's performance. \ref{LIDCurvature} presents supplementary information on LID analysis. Finally, \ref{otherDEF} evaluates the performance of other SOTA defences, comparing them to CurvaLID.
All experiments were conducted on a system with a single Nvidia H100 GPU, 8 CPU cores, and 128 GB of RAM.

\subsection{Supplementary information about CurvaLID}
\label{Appendix2}

This section includes time and space complexity of CurvaLID, and also the experimental settings for CurvaLID analysis. 

\subsubsection{Time and space complexity of CurvaLID}
\label{TimeComplexity}
The time complexity for PromptLID is $\mathcal{O}(np)$, where $n$ is the number of prompts and $p$ is the dimensionality of the prompt embeddings. In our implementation, we use the representation layer of the CNN to obtain the embeddings. For TextCurv, the time complexity is $\mathcal{O}(nmd)$, where $m$ is the number of words in a prompt and $d$ is the word embedding dimensionality. In our case, we use RoBERTa embeddings for word representations. Therefore, the overall time complexity is $\mathcal{O}(n(p+md))$.

The space complexity for PromptLID is $\mathcal{O}(np)$, as we store n prompts, each with $p$ dimensions. For TextCurv, the space complexity is $\mathcal{O}(nz)$, where $z$ represents the dimensionality of the trained CNN layers used in our computations. Consequently, the space complexity is $\mathcal{O}(n(p+z))$.

\subsubsection{CNN hyperparameter selection}
\label{CNNconfig}

We conducted a preliminary study for the CNN hyperparameter selection to determine the optimal architecture based on overall CurvaLID detection accuracy, training time, and stability. Stability was evaluated by measuring the average CurvaLID accuracy across 10 random seeds. The study involved experimenting with the following parameters:

\textbf{Number of convolutional layers:} Tested configurations with 1 to 5 layers.\newline
\textbf{Activation functions:} Evaluated ReLU, ELU, tanh, sigmoid, and softplus.\newline
\textbf{Kernel sizes:} Tested kernel sizes ranging from 2 to 5.\newline
\textbf{Number of parameters:} Adjusted the dense layer sizes to 64, 128, and 256 units.\newline
\textbf{Epochs:} Tested training with 10, 20, 30, 40, 50 epochs.\newline
\textbf{Batch sizes:} Evaluated sizes of 16, 32, 64, and 128.\newline
\textbf{Optimizers:} Compared Adam and SGD.\newline

The experimental results are shown in Table \ref{tab:cnn_hyperparameters}. We observe that the parameter tuning of the CNN does not significantly impact the overall detection accuracy, as the accuracy generally fluctuates around 0.98 to 0.99. Notably, the selected CNN extracts features from 1,000 prompts in under 0.5 seconds, demonstrating its efficiency.

\begin{table}[!hbt]
\centering
\caption{Performance metrics for CNN hyperparameter selection.}
\begin{adjustbox}{width=0.5\linewidth}
\begin{tabular}{@{}c|c|c@{}}
\toprule
\textbf{Hyperparameter}      & \textbf{Overall Accuracy} & \textbf{Time (min)} \\ \midrule
\textbf{No. of Conv. Layers} &                           &                     \\ \midrule
1                            & 0.962                     & 13.2                \\ \midrule
2                            & 0.992                     & 14.6                \\ \midrule
3                            & 0.992                     & 14.6                \\ \midrule
4                            & 0.974                     & 15.8                \\ \midrule
5                            & 0.993                     & 15.7                \\ \midrule
\textbf{Activation Function} &                           &                     \\ \midrule
ReLU                         & 0.992                     & 14.8                \\ \midrule
ELU                          & 0.993                     & 15.1                \\ \midrule
tanh                         & 0.977                     & 15.1                \\ \midrule
Sigmoid                      & 0.975                     & 14.5                \\ \midrule
Softplus                     & 0.991                     & 14.8                \\ \midrule
\textbf{Kernel Size}         &                           &                     \\ \midrule
2                            & 0.942                     & 14.8                \\ \midrule
3                            & 0.992                     & 14.9                \\ \midrule
4                            & 0.988                     & 14.8                \\ \midrule
5                            & 0.982                     & 15.0                \\ \midrule
\textbf{Dense Layer Size}    &                           &                     \\ \midrule
64                           & 0.958                     & 14.8                \\ \midrule
128                          & 0.990                     & 15.2                \\ \midrule
256                          & 0.992                     & 15.6                \\ \midrule
\textbf{Epochs}              &                           &                     \\ \midrule
10                           & 0.943                     & 14.2                \\ \midrule
20                           & 0.991                     & 14.9                \\ \midrule
30                           & 0.989                     & 15.3                \\ \midrule
40                           & 0.990                     & 15.5                \\ \midrule
50                           & 0.988                     & 15.3                \\ \midrule
\textbf{Batch Size}          &                           &                     \\ \midrule
16                           & 0.990                     & 15.6                \\ \midrule
32                           & 0.991                     & 14.8                \\ \midrule
64                           & 0.988                     & 14.7                \\ \midrule
128                          & 0.982                     & 14.2                \\ \midrule
\textbf{Optimizer}           &                           &                     \\ \midrule
Adam                         & 0.990                     & 15.2                \\ \midrule
SGD                          & 0.981                     & 15.8                \\ \bottomrule
\end{tabular}
\label{tab:cnn_hyperparameters}
\end{adjustbox}
\end{table}

\subsubsection{Detailed parameters and specifics of the CNN architecture for classifying benign prompts in CurvaLID step 1}
\label{ArchCNN}

The CNN architecture consists of an input layer and two 1D convolutional layers. The first Conv1D layer applies 32 filters with a kernel size of 3 and a ReLU activation function, while the second Conv1D layer increases the number of filters to 64, again using a kernel size of 3 and ReLU activation. The output from the convolutional layers is flattened before passing through a fully connected layer with 128 units and ReLU activation. Finally, the network includes an output layer with four units and a softmax activation to classify the input into four distinct categories: Orca, MMLU, AlphEval, and TQA. The model is compiled using the Adam optimizer, categorical cross-entropy loss, and accuracy as the evaluation metric. Training is conducted over 20 epochs with a batch size of 32 and a validation split of 20\%.

\subsubsection{Detailed parameters and specifics of the MLP architecture for classifying benign and adversarial prompts in CurvaLID step 4}
\label{ArchMLP}

The MLP architecture consists of two fully connected layers and an output layer. The first layer contains 256 neurons with ReLU activation, followed by a batch normalization and dropout layer with a rate of 0.5 to prevent overfitting. A second layer, with 128 neurons and ReLU activation, is followed by another batch normalization and dropout layer. The final output layer uses softmax activation with two units corresponding to the binary classification of benign and adversarial prompts. The model is compiled using the Adam optimizer, a learning rate of 0.001, and categorical cross-entropy as the loss function, and it is trained over 150 epochs with early stopping to prevent overfitting.

\subsubsection{Experimental settings for Section \ref{mainresult6_1}}
\label{resultsetting}

We use a total of 3,540 testing prompts, comprising 1,200 benign and 2,340 adversarial prompts. 

For benign prompts, we randomly sampled 300 prompts from each of Orca, MMLU, AlphacaEval and TruthfulQA. 

\noindent For adversarial prompts:
\begin{itemize}
    \item \textbf{SAP}: We gathered 320 SAP200 prompts by randomly selecting 40 prompts from each of the 8 adversarial goals.
    
    \item \textbf{DAN}: We randomly sampled 350 prompts from the adversarial examples uploaded on their GitHub, covering roughly half of their total prompt set.
    
    \item \textbf{MathAttack}: We used all 300 adversarial prompts provided on their GitHub.
    
    \item \textbf{GCG}: We followed their default parameter settings—learning rate = 0.01, batch size = 512, top-k = 256, temperature = 1—to generate a universal adversarial suffix. All 349 adversarial behaviours listed in their GitHub were used.
    
    \item \textbf{PAIR}: We generated 171 adversarial prompts using PAIR. Their implementation targets 50 adversarial goals per LLM, but does not always succeed in producing a prompt for each goal under the default configuration.
    
    \item \textbf{RandomSearch}: We retrieved prompts directly from their GitHub and randomly selected 200 unique adversarial prompts, as many were duplicates across LLMs.
    
    \item \textbf{AmpleGCG}: With author permission, we accessed their adversarial prompts and randomly sampled 200 prompts from the set.
    
    \item \textbf{Persuasive Attack}: We included 150 prompts uploaded by the authors on Hugging Face.
    
    \item \textbf{AutoDAN}: We included 150 prompts generated per the authors’ GitHub instructions for each targeted LLM.
    
    \item \textbf{DrAttack}: We tested 150 adversarial prompts generated following the configuration provided in their GitHub.
\end{itemize}

\clearpage
\subsection{Ablation studies}
\label{ablationAppendix2}

This section presents ablation studies for CurvaLID, covering its performance across various adversarial prompts, baseline comparisons, and effectiveness under different training conditions and parameter settings. The section is organized as follows:
\begin{itemize}
\item \textbf{\ref{otherResults} to \ref{app:benign_detection_mmlu}}: Performance of CurvaLID across various adversarial prompts.
\item \textbf{\ref{MLPAppendix} to \ref{app:cnn_vs_minilm}}: Baseline comparisons against CurvaLID.
\item \textbf{\ref{lessdataAccuracy} to \ref{app:heldout_length}}: Performance of CurvaLID under different training conditions and parameter settings.
\item \textbf{\ref{cnnTextCurva} to \ref{GIDanalysis}}: Ablation studies on the contributions of TextCurv and PromptLID.
\end{itemize}

\subsubsection{Performance metrics for CurvaLID in PAIR, RandomSearch, AmpleGCG, Persuasive Attack, AutoDAN, DrAttack}
\label{otherResults}

Table \ref{otherPerform} shows the performance metrics for CurvaLID in PAIR, RandomSearch, AmpleGCG, Persuassive Attack, AutoDAN, and DrAttack. Note that due to the abundance of each dataset, we are testing these adversarial datasets against benign datasets individually, i.e., four benign datasets against each adversarial dataset.

We obtained 100 prompts from each of the four benign datasets. The six adversarial datasets (PAIR, RandomSearch, AmpleGCG, Persuasive Attack, AutoDAN, and DrAttack) follow the same configuration described in Appendix~\ref{resultsetting}.

\begin{table}[!hbt]
\caption{Performance metrics for CurvaLID in PAIR, RandomSearch, AmpleGCG, Persuasive Attack, AutoDAN, and DrAttack.}
\centering
\begin{adjustbox}{width=\linewidth}
\begin{tabular}{@{}l|c|c|c|c|c|c|c@{}}
\toprule
\textbf{Adv. Dataset} & \textbf{PAIR} & \textbf{RandomSearch} & \textbf{AmpleGCG} & \textbf{Persuasive Attack} & \textbf{AutoDAN} & \textbf{DrAttack} & \textbf{Avg.} \\ \midrule
Benign Acc.  & 0.973 & 1 & 0.975 & 0.952 & 0.973 & 0.975 & 0.975 \\ \midrule
Adv. Acc.    & 1     & 1 & 0.976 & 1     & 1     & 1     & 0.996 \\ \midrule
Overall Acc. & 0.983 & 1 & 0.975 & 0.962 & 0.978 & 0.980 & 0.986 \\ \midrule
F1 Score     & 0.986 & 1 & 0.987 & 0.974 & 0.986 & 0.987 & 0.985 \\ \bottomrule
\end{tabular}
\label{otherPerform}
\end{adjustbox}
\end{table}

\subsubsection{CurvaLID on demonstration-based attacks}
\label{demoAttack}

While our focus was on single-shot adversarial prompts, CurvaLID naturally extends to demonstration-based attacks due to its model-agnostic design. Operating independently of the target LLM, it is unaffected by prior demonstrations or context instructions. Any geometric anomaly within the prompt can be detected and filtered before reaching the LLM, effectively mitigating the attack.

We conducted additional experiments on In-Context Demonstration and cipher-based attacks \citep{wei2023jailbreak, yuan2023gpt}. For the former, we tested 400 adversarial prompts using the setup from Appendix \ref{otherResults}. For the cipher-based attack, we used 400 prompts from the official GitHub repository. The detailed results are shown in Table \ref{tab:demo_cipher} below:

\begin{table}[!hbt]
\centering
\caption{Performance metrics for CurvaLID on in-context demonstration and cipher-based attacks.}
\label{tab:demo_cipher}
\begin{adjustbox}{width=0.8\linewidth}
\begin{tabular}{@{}lccc@{}}
\toprule
\textbf{Attack Type} & \textbf{Benign Accuracy} & \textbf{Adversarial Accuracy} & \textbf{Overall Accuracy} \\
\midrule
In-Context Demonstration Attack & 0.9894 & 0.973 & 0.9812 \\
Cipher-based Attack             & 0.9400 & 0.910 & 0.9250 \\
\bottomrule
\end{tabular}
\end{adjustbox}
\end{table}

\subsubsection{CurvaLID on non-English adversarial prompts}
\label{nonEng}
Table~\ref{tab:performance_metrics} demonstrates CurvaLID's effectiveness in detecting non-English adversarial prompts by evaluating it on nine languages from the MultiJail dataset \citep{deng2023multilingual}. We randomly sampled 300 prompts from each of the nine languages—Chinese (zh), Italian (it), Vietnamese (vi), Arabic (ar), Korean (ko), Thai (th), Bengali (bn), Swahili (sw), and Javanese (jv)—and tested them individually against 400 benign prompts, adjusted to avoid class imbalance. The benign prompts were sampled by gathering 100 entries from each of four different benign datasets. CurvaLID achieved an overall accuracy and F1 score of 0.994, highlighting its robust ability to detect adversarial prompts across a diverse range of languages.

\begin{table}[!hbt]
\centering
\caption{Performance metrics for CurvaLID on non-English adversarial datasets.}
\label{tab:performance_metrics}
\begin{adjustbox}{width=0.9\linewidth}
\begin{tabular}{@{}c|ccccccccc|c@{}}
\toprule
\textbf{Adv. Dataset} & \textbf{zh} & \textbf{it} & \textbf{vi} & \textbf{ar} & \textbf{ko} & \textbf{th} & \textbf{bn} & \textbf{sw} & \textbf{jv} & \textbf{Avg} \\ \midrule
Benign Acc. & 0.975 & 1.000 & 1.000 & 1.000 & 1.000 & 1.000 & 1.000 & 0.975 & 1.000 & 0.994 \\ \midrule
Adv. Acc. & 1.000 & 0.984 & 0.984 & 1.000 & 1.000 & 1.000 & 0.984 & 1.000 & 0.984 & 0.993 \\ \midrule
Overall Acc. & 0.988 & 0.994 & 0.994 & 1.000 & 1.000 & 1.000 & 0.994 & 0.988 & 0.994 & 0.994 \\ \midrule
F1 Score & 0.987 & 0.994 & 0.994 & 1.000 & 1.000 & 1.000 & 0.994 & 0.987 & 0.994 & 0.994 \\ \bottomrule
\end{tabular}
\end{adjustbox}
\end{table}

\subsubsection{CurvaLID on Persona Modulation}
\label{PMEva}

We evaluate CurvaLID against persona modulation attacks, following the setup in Shah et al. \citep{shah2023scalable}. We generated 200 attack prompts and randomly sampled 200 benign prompts from our benign dataset, as described in Section \ref{experiment}. The experiment setup follows our main evaluation in Section \ref{mainresult6_1}.  

The results, reported in Table~\ref{tab:persona_modulation}, show that CurvaLID achieves close to perfect detection accuracy on both benign and persona modulation prompts, demonstrating its robustness.  

\begin{table}[!hbt]
\caption{CurvaLID accuracy on benign prompts and persona modulation attacks.}
\centering
\small 
\begin{tabular}{c|c|c}
\toprule
Method   & Benign & Persona Modulation \\ \midrule
CurvaLID & 0.985  & 0.995 \\ \bottomrule
\end{tabular}
\label{tab:persona_modulation}
\end{table}

\subsubsection{CurvaLID on HarmBench}
\label{HBEval}

We evaluated HarmBench \citep{mazeika2024harmbench} and tested GCG, AutoDAN, TAP, DirectRequest, and DAN on LLaMA2-7B and Vicuna-7B with 300 prompts each. We also re-evaluated baseline defences (SmoothLLM, Intentionanalysis, Self-Reminder) on these HarmBench prompts.  

As shown in Table~\ref{tab:harmbench}, CurvaLID achieved near-zero attack success rates and matched or outperformed these baselines. 

\begin{table}[!hbt]
\caption{ASR (\%) of HarmBench attacks under CurvaLID and baseline defences.}
\centering
\small
\begin{adjustbox}{scale=0.9}
\begin{tabular}{@{}cccccccc@{}}
\toprule
\textbf{LLM} & \textbf{defence} & \textbf{GCG} & \textbf{AutoDAN} & \textbf{TAP} & \textbf{DirectRequest} & \textbf{DAN} & \textbf{Average} \\
\midrule

\multirow{4}{*}{\textbf{LLaMA-7B}}
 & SmoothLLM         & 0.5  & 80.33 & \textbf{0} & \textbf{0} & 0.5  & 16.27 \\ \cmidrule(lr){2-8}
 & Intentionanalysis & \textbf{0} & 0.33  & \textbf{0} & \textbf{0} & 1.67 & 0.40 \\ \cmidrule(lr){2-8}
 & Self-Reminder     & \textbf{0} & 0.33  & \textbf{0} & \textbf{0} & 2.5  & 0.57 \\ \cmidrule(lr){2-8}
 & \cellcolor[gray]{0.85}\textbf{CurvaLID}
                     & \cellcolor[gray]{0.85}1.33
                     & \cellcolor[gray]{0.85}\textbf{0}
                     & \cellcolor[gray]{0.85}\textbf{0}
                     & \cellcolor[gray]{0.85}\textbf{0}
                     & \cellcolor[gray]{0.85}\textbf{0}
                     & \cellcolor[gray]{0.85}\textbf{0.266} \\
\midrule

\multirow{4}{*}{\textbf{Vicuna-7B}}
 & SmoothLLM         & 9.67 & 91.5  & \textbf{0} & \textbf{0} & 19.67 & 24.168 \\ \cmidrule(lr){2-8}
 & Intentionanalysis & \textbf{0.33} & 11.0 & 1.5 & \textbf{0} & 9.5  & 4.47 \\ \cmidrule(lr){2-8}
 & Self-Reminder     & 9.33 & 92.0  & 7.33 & \textbf{0} & 57.5 & 33.23 \\ \cmidrule(lr){2-8}
 & \cellcolor[gray]{0.85}\textbf{CurvaLID}
                     & \cellcolor[gray]{0.85}2.5
                     & \cellcolor[gray]{0.85}\textbf{3.67}
                     & \cellcolor[gray]{0.85}2.33
                     & \cellcolor[gray]{0.85}\textbf{0}
                     & \cellcolor[gray]{0.85}\textbf{0}
                     & \cellcolor[gray]{0.85}\textbf{1.7} \\
\bottomrule
\end{tabular}
\end{adjustbox}
\renewcommand{\arraystretch}{1.0}
\label{tab:harmbench}
\end{table}

\subsubsection{Evaluation on OR-Bench and XSTest}
\label{app:or_xstest}

We evaluated CurvaLID on over-refusal benchmarks, namely OR-Bench and XSTest \citep{cui2024or,rottger2023xstest}. We tested on Vicuna-7B and LLaMA-2-7B and measured its effect on acceptance and rejection rates. We sampled 200 prompts each from the OR-Bench-Hard and OR-Bench-Toxic splits, and similarly from XSTest-Safe and XSTest-Unsafe. As shown in Tables~\ref{tab:orbench_results} and \ref{tab:xstest_results}, CurvaLID preserved LLaMA-2-7B’s strong rejection behaviour. For Vicuna-7B, CurvaLID reduced harmful acceptance by up to 30\%, with only a modest increase (about 10\%) in benign rejections, suggesting it can enhance safety without substantially impacting utility.

\begin{table}[!hbt]
\caption{CurvaLID performance on OR-Bench.}
\centering
\small
\renewcommand{\arraystretch}{1.2}
\begin{adjustbox}{width=0.5\linewidth}
\begin{tabular}{@{}c|c|c|c|c@{}}
\toprule
\textbf{Dataset} & \textbf{Metric} & \textbf{Model} & \textbf{defence} & \textbf{Rate (\%)} \\ \midrule
\multirow{4}{*}{OR-Bench-Hard}  & \multirow{4}{*}{Rejection} & \multirow{2}{*}{LLaMA-2-7B} & No defence & 85.5 \\ \cline{4-5}
                                &                              &                              & CurvaLID   & 91.0 \\ \cline{3-5}
                                &                              & \multirow{2}{*}{Vicuna-7B}  & No defence & 52.5 \\ \cline{4-5}
                                &                              &                              & CurvaLID   & 60.5 \\ \midrule
\multirow{4}{*}{OR-Bench-Toxic} & \multirow{4}{*}{Acceptance} & \multirow{2}{*}{LLaMA-2-7B} & No defence & 0.5  \\ \cline{4-5}
                                &                              &                              & CurvaLID   & 0.0  \\ \cline{3-5}
                                &                              & \multirow{2}{*}{Vicuna-7B}  & No defence & 33.5 \\ \cline{4-5}
                                &                              &                              & CurvaLID   & 1.5  \\ \bottomrule
\end{tabular}
\end{adjustbox}
\label{tab:orbench_results}
\renewcommand{\arraystretch}{1.0}
\end{table}

\begin{table}[!hbt]
\caption{CurvaLID performance on XSTest.}
\centering
\small
\renewcommand{\arraystretch}{1.2}
\begin{adjustbox}{width=0.5\linewidth}
\begin{tabular}{@{}c|c|c|c|c@{}}
\toprule
\textbf{Dataset} & \textbf{Metric} & \textbf{Model} & \textbf{defence} & \textbf{Rate (\%)} \\ \midrule
\multirow{4}{*}{XSTest-Safe}   & \multirow{4}{*}{Rejection} & \multirow{2}{*}{LLaMA-2-7B} & No defence & 52.0 \\ \cline{4-5}
                               &                              &                              & CurvaLID   & 59.5 \\ \cline{3-5}
                               &                              & \multirow{2}{*}{Vicuna-7B}  & No defence & 12.5 \\ \cline{4-5}
                               &                              &                              & CurvaLID   & 25.0 \\ \midrule
\multirow{4}{*}{XSTest-Unsafe} & \multirow{4}{*}{Acceptance} & \multirow{2}{*}{LLaMA-2-7B} & No defence & 0.5  \\ \cline{4-5}
                               &                              &                              & CurvaLID   & 0.0  \\ \cline{3-5}
                               &                              & \multirow{2}{*}{Vicuna-7B}  & No defence & 24.5 \\ \cline{4-5}
                               &                              &                              & CurvaLID   & 2.0  \\ \bottomrule
\end{tabular}
\end{adjustbox}
\label{tab:xstest_results}
\renewcommand{\arraystretch}{1.0}
\end{table}

\subsubsection{Evaluation of baseline methods on benign prompts}
\label{app:benign_detection_mmlu}

We conducted an evaluation of baseline methods on benign prompts. Specifically, we compared CurvaLID with SmoothLLM, Self-Reminder, Intentionanalysis, ICD, and RTT3d on both Vicuna-7B and LLaMA2-7B. For the benign dataset, we randomly sampled 200 questions from MMLU. We use MMLU because unlike input-perturbation defences such as SmoothLLM and Intentionanalysis, CurvaLID operates as a detection algorithm without modifying the input or the internal mechanisms of the LLM. Thus, MMLU allows for a fair and consistent benchmark across all methods, as it consists of multiple-choice questions with clear-cut right or wrong answers.

To ensure fairness in measurement, if CurvaLID incorrectly flags a benign MMLU prompt as adversarial, we count the resulting LLM output as incorrect. The results are presented in Table~\ref{tab:benign_mmlu} below. We observe that CurvaLID has minimal impact on benign performance. However, we also find that most competing defences similarly maintain high accuracy on MMLU, showing no significant degradation in utility. 

\begin{table}[!hbt]
\caption{Accuracy on benign MMLU questions under different defence methods. We compare the utility impact of CurvaLID and baseline defences (SmoothLLM, Self-Reminder, Intentionanalysis, ICD, RTT3d) on Vicuna-7B and LLaMA2-7B. Accuracy is measured as the percentage of correctly answered MMLU prompts. “Original” refers to performance without any defence applied.
}
\centering
\small
\renewcommand{\arraystretch}{1.2}
\begin{adjustbox}{width=0.9\linewidth}
\begin{tabular}{@{}c|c|c|c|c|c|c|c@{}}
\toprule
\textbf{Model} & \textbf{Original} & \textbf{CurvaLID} & \textbf{SmoothLLM} & \textbf{Self-Reminder} & \textbf{Intentionanalysis} & \textbf{ICD} & \textbf{RTT3d} \\ \midrule
Vicuna-7B     & 46.0 & 48.0 & 40.0 & 46.0 & 50.0 & 48.0 & 39.5 \\
LLaMA-2-7B    & 48.5 & 45.5 & 42.5 & 49.0 & 45.0 & 44.5 & 44.5 \\ \bottomrule
\end{tabular}
\end{adjustbox}
\label{tab:benign_mmlu}
\renewcommand{\arraystretch}{1.0}
\end{table}

\subsubsection{Baseline classification accuracy using RoBERTa embeddings and MLP}
\label{MLPAppendix}

Table~\ref{MLPRoBERTa} presents the classification accuracy and performance metrics using RoBERTa embeddings as input to MLP. The MLP consists of a single hidden layer with 128 units, trained for a maximum of 300 iterations. The classification results show a benign class accuracy of 0.893, an adversarial class accuracy of 0.953, and an overall accuracy of 0.923.

\begin{table}[!hbt]
\centering
\caption{Classification accuracy and performance metrics using RoBERTa embeddings as input to MLP}
\label{MLPRoBERTa}
\begin{adjustbox}{width=0.7\linewidth}
\begin{tabular}{@{}c|cccc|c|c|c@{}}
\toprule
\textbf{Class}         & \multicolumn{4}{c|}{\textbf{Dataset Accuracy}}      & \textbf{Class Acc.}   & \textbf{Overall Acc.} & \textbf{F1}           \\ \midrule
\multirow{2}{*}{Benign} &
  Orca & MMLU & AlpEval & \multicolumn{1}{c|}{TQA} &
  \multirow{2}{*}{0.893} &
  \multirow{4}{*}{0.923} &
  \multirow{4}{*}{0.924} \\ \cmidrule(lr){2-5}
 & 0.872 & 0.899 & 0.893 & \multicolumn{1}{c|}{0.905} &  &  &  \\ \cmidrule(r){1-6}
\multirow{2}{*}{Adv.} &
  SAP & DAN & MWP & \multicolumn{1}{c|}{GCG} &
  \multirow{2}{*}{0.953} &
   &  \\ \cmidrule(lr){2-5}
 & 0.9400 & 0.962 & 0.885 & \multicolumn{1}{c|}{1.000} &  &  &  \\ \bottomrule
\end{tabular}
\end{adjustbox}
\end{table}

\subsubsection{Baseline classification accuracy using CurvaLID without PromptLID and TextCurv}
\label{noStep3}

To investigate the importance of the geometric features PromptLID and TextCurv in CurvaLID, we conducted an ablation study by removing Step 3 in CurvaLID (Figure~\ref{fig:cnn_diagram}). Specifically, we skipped the calculation of PromptLID and TextCurv, and directly fed the CNN representations from Step 2 into the MLP in Step 4 for binary classification. The goal is to assess the standalone performance of the CNN and MLP setup in CurvaLID, serving as a baseline without geometric information.

The results are presented in Table~\ref{nostep3Table}. Although the model performs reasonably well, its overall accuracy and F1 score drop by 7\% compared to the full CurvaLID with PromptLID and TextCurv.
This highlights the critical contribution of the geometric features in capturing topological differences between benign and adversarial prompts and improving classification robustness.

\begin{table}[!hbt]
\centering
\caption{Classification accuracy and performance metrics using CurvaLID without PromptLID and TextCurv (i.e., Step 3 removed). CNN representations are directly used as input to the MLP.}
\label{nostep3Table}
\begin{adjustbox}{width=0.7\linewidth}
\begin{tabular}{@{}c|cccc|c|c|c@{}}
\toprule
\textbf{Class}         & \multicolumn{4}{c|}{\textbf{Dataset Accuracy}}      & \textbf{Class Acc.}   & \textbf{Overall Acc.} & \textbf{F1}           \\ \midrule
\multirow{2}{*}{Benign} &
  Orca & MMLU & AlpEval & \multicolumn{1}{c|}{TQA} &
  \multirow{2}{*}{0.895} &
  \multirow{4}{*}{0.920} &
  \multirow{4}{*}{0.922} \\ \cmidrule(lr){2-5}
 & 0.920 & 0.850 & 0.900 & \multicolumn{1}{c|}{0.910} &  &  &  \\ \cmidrule(r){1-6}
\multirow{2}{*}{Adv.} &
  SAP & DAN & MWP & \multicolumn{1}{c|}{GCG} &
  \multirow{2}{*}{0.947} &
   &  \\ \cmidrule(lr){2-5}
 & 1.000 & 0.952 & 0.845 & \multicolumn{1}{c|}{0.992} &  &  &  \\ \bottomrule
\end{tabular}
\end{adjustbox}
\end{table}

\subsubsection{ASR of \ref{MLPAppendix} and \ref{noStep3} baseline classifications}
\label{ASRnostep3}

We investigated the two baselines mentioned in \ref{MLPAppendix} and \ref{noStep3} against seven types of adversarial prompts, including GCG, PAIR, DAN, AmpleGCG, SAP, MathAttack, and RandomSearch. Here we name the baseline in \ref{MLPAppendix} as "RoBERTa+MLP" and the baseline in \ref{noStep3} as "CNN+MLP". In our approach, if a prompt is classified as a jailbreak prompt, it is rejected.
The comparison against baseline defences across multiple LLMs, measured by ASR (\%), is presented in Table~\ref{tableASRBaselines}.

The RoBERTa+MLP and CNN+MLP baselines perform poorly, particularly on PAIR, DAN, and MathAttack, where ASR remains high across all LLMs. These results highlight the importance of TextCurv and PromptLID in CurvaLID for enabling robust classification and significantly reducing ASR of adversarial prompts across LLMs.

\begin{table}[!hbt]
\caption{Comparison of CurvaLID with baseline defences in multiple LLMs, measured by ASR (\%) on seven adversarial prompt types.}
\centering
\begin{adjustbox}{width=0.9\linewidth}
\begin{tabular}{@{}ccccccccc@{}}
\toprule
\textbf{LLM} & \textbf{defence} & \textbf{GCG} & \textbf{PAIR} & \textbf{DAN} & \textbf{AmpleGCG} & \textbf{SAP} & \textbf{MathAttack} & \textbf{RandomSearch} \\ 
\midrule

\multirow{4}{*}{\textbf{Vicuna-7B}} 
 & No defence & 86.0 & 98.0 & 44.5 & 98.0 & 69.0 & 24.0 & 94.0 \\ \cmidrule{2-9}
 & RoBERTa + MLP & \textbf{0.0} & 12.2 & 3.6 & 2.5 & 4.25 & 11.5 & 0.2 \\
 & CNN + MLP & 0.2 & 16.4 & 4.1 & 2.9 & \textbf{0.0} & 15.0 & 0.7 \\
 & \cellcolor[gray]{0.85}\textbf{CurvaLID} 
    & \cellcolor[gray]{0.85}\textbf{0.0} 
    & \cellcolor[gray]{0.85}\textbf{0.0} 
    & \cellcolor[gray]{0.85}\textbf{0.0}
    & \cellcolor[gray]{0.85}1.1 
    & \cellcolor[gray]{0.85}\textbf{0.0} 
    & \cellcolor[gray]{0.85}\textbf{0.0} 
    & \cellcolor[gray]{0.85}\textbf{0.0} \\ 
\midrule

\multirow{4}{*}{\textbf{LLaMA2-7B}} 
 & No defence & 12.5 & 19.0 & 2.0 & 81.0 & 9.5 & 11.7 & 90.0 \\ \cmidrule{2-9}
 & RoBERTa + MLP & \textbf{0.0} & 2.9 & \textbf{0.0} & \textbf{0.0} & 1.1 & 7.8 & \textbf{0.0} \\
 & CNN + MLP & \textbf{0.0} & 5.5 & 3.1 & 0.1 & \textbf{0.0} & 9.8 & 0.4 \\
 & \cellcolor[gray]{0.85}\textbf{CurvaLID} 
    & \cellcolor[gray]{0.85}\textbf{0.0} 
    & \cellcolor[gray]{0.85}\textbf{0.0} 
    & \cellcolor[gray]{0.85}\textbf{0.0}
    & \cellcolor[gray]{0.85}\textbf{0.0}
    & \cellcolor[gray]{0.85}\textbf{0.0}
    & \cellcolor[gray]{0.85}\textbf{0.0}
    & \cellcolor[gray]{0.85}\textbf{0.0} \\ 
\midrule

\multirow{4}{*}{\textbf{GPT-3.5}} 
 & No defence & 12.0 & 48.0 & 6.33 & 82.0 & 0.9 & 10.5 & 73.0 \\ \cmidrule{2-9}
 & RoBERTa + MLP & \textbf{0.0} & 0.3 & 1.2 & 0.2 & \textbf{0.0} & 5.1 & \textbf{0.0} \\
 & CNN + MLP & \textbf{0.0} & 0.6 & 2.1 & 0.3 & \textbf{0.0} & 2.9 & 0.1 \\
 & \cellcolor[gray]{0.85}\textbf{CurvaLID} 
    & \cellcolor[gray]{0.85}\textbf{0.0}
    & \cellcolor[gray]{0.85}\textbf{0.0}
    & \cellcolor[gray]{0.85}\textbf{0.0}
    & \cellcolor[gray]{0.85}\textbf{0.0}
    & \cellcolor[gray]{0.85}\textbf{0.0}
    & \cellcolor[gray]{0.85}\textbf{0.0}
    & \cellcolor[gray]{0.85}\textbf{0.0} \\
\midrule

\multirow{4}{*}{\textbf{PaLM2}} 
 & No defence & 14.9 & 98.0 & 49.7 & 88.9 & 55.1 & 18.9 & 91.9 \\ \cmidrule{2-9}
 & RoBERTa + MLP & \textbf{0.0} & 18.7 & 3.2 & 0.68 & 5.9 & 11.0 & \textbf{0.0} \\
 & CNN + MLP & 0.1 & 7.8 & 4.8 & 1.2 & \textbf{0.0} & 8.4 & \textbf{0.0} \\
 & \cellcolor[gray]{0.85}\textbf{CurvaLID} 
    & \cellcolor[gray]{0.85}\textbf{0.0}
    & \cellcolor[gray]{0.85}\textbf{0.0}
    & \cellcolor[gray]{0.85}\textbf{0.0}
    & \cellcolor[gray]{0.85}\textbf{0.0}
    & \cellcolor[gray]{0.85}\textbf{0.0}
    & \cellcolor[gray]{0.85}\textbf{0.0}
    & \cellcolor[gray]{0.85}\textbf{0.0} \\ 
\bottomrule
\end{tabular}
\end{adjustbox}
\label{tableASRBaselines}
\end{table}

\subsubsection{Embedding Source Ablation: CNN and SBERT}
\label{app:cnn_vs_minilm}

CurvaLID employs a single lightweight CNN to produce both word-level (TextCurv) and sentence-level (PromptLID) representations from one input. To examine sensitivity to the sentence–embedding source, we replaced the CNN-derived sentence embeddings with the widely used SBERT model \texttt{all-MiniLM-L6-v2} when computing PromptLID. 
Table~\ref{tab:promptlid_cnn_minilm_full_twoline} presents the comparison between CNN-based and the pretrained LLM embedding sentence embeddings for PromptLID and CurvaLID. It shows that the pretrained LLM embedding approach yields similar performance to our original CurvaLID configuration. Therefore, we conclude that the pretrained LLM embedding does not offer a performance advantage in this setting. 

\begin{table}[!hbt]
\caption{Comparison between CNN-based and pretrained LLM (MiniLM) sentence embeddings for PromptLID and CurvaLID. The first two columns report PromptLID-only performance (sentence reps from CNN or MiniLM). The last two columns report the full CurvaLID with PromptLID computed from the respective embeddings.}
\centering
\Large
\renewcommand{\arraystretch}{1.25}
\begin{adjustbox}{width=0.8\linewidth} 
\begin{tabular}{@{}c|c|c|c|c@{}}
\toprule
\textbf{Metric} &
\textbf{\shortstack{PromptLID\\(CNN only)}} &
\textbf{\shortstack{PromptLID\\(MiniLM only)}} &
\textbf{\shortstack{CurvaLID\\(PromptLID from CNN)}} &
\textbf{\shortstack{CurvaLID\\(PromptLID from MiniLM)}} \\ \midrule
Benign Accuracy      & 0.987 & 0.945 & 0.984 & 0.955 \\
Adversarial Accuracy & 0.932 & 1.000 & 1.000 & 1.000 \\
Overall Accuracy     & 0.958 & 0.967 & 0.992 & 0.973 \\
F1 Score             & 0.958 & 0.960 & 0.992 & 0.967 \\ \bottomrule
\end{tabular}
\end{adjustbox}
\label{tab:promptlid_cnn_minilm_full_twoline}
\end{table}

\subsubsection{Accuracy of CurvaLID with less data}
\label{lessdataAccuracy}

Table \ref{lessData} shows the performance of CurvaLID when trained with less data. We training and tested CurvaLID with 150 prompts from each dataset, halving the number of prompts used from the main result. All other parameters remained the same.

We observe that training CurvaLID with less data has a minimal impact on overall detection accuracy, as both the overall accuracy and F1 score only decreased from 0.992 to 0.988. This demonstrates the efficiency of CurvaLID in leveraging geometric features, enabling it to maintain high performance even with limited training data. Such robustness underscores CurvaLID’s potential for deployment in scenarios where access to large, labelled datasets is constrained, making it practical for real-world applications with data scarcity.

\begin{table}[!hbt]
\centering
\caption{Classification Accuracy and Performance Metrics for CurvaLID on Benign and Adversarial Datasets with 150 Data from Each Dataset.}
\label{lessData}
\begin{adjustbox}{width=0.7\linewidth}
\begin{tabular}{@{}c|ccccc|c|c@{}}
\toprule
\textbf{Class}         & \multicolumn{4}{c|}{\textbf{Dataset Accuracy}}      & \textbf{Class Acc.}   & \textbf{Overall Acc.} & \textbf{F1}           \\ \midrule
\multicolumn{1}{c|}{\multirow{2}{*}{Benign}} &
  Orca &
  MMLU &
  AlpEval &
  \multicolumn{1}{c|}{TQA} &
  \multicolumn{1}{c|}{\multirow{2}{*}{0.992}} &
  \multicolumn{1}{c|}{\multirow{4}{*}{0.988}} &
  \multicolumn{1}{c}{\multirow{4}{*}{0.988}} \\ \cmidrule(lr){2-5}
\multicolumn{1}{c|}{} & 0.9565 & 1.000 & 1.000 & \multicolumn{1}{c|}{1.000} & \multicolumn{1}{c|}{} & \multicolumn{1}{c|}{} & \multicolumn{1}{c}{} \\ \cmidrule(r){1-6}
\multicolumn{1}{c|}{\multirow{2}{*}{Adv.}} &
  SAP &
  DAN &
  MathAtk &
  \multicolumn{1}{c|}{GCG} &
  \multicolumn{1}{c|}{\multirow{2}{*}{0.983}} &
  \multicolumn{1}{c|}{} &
  \multicolumn{1}{c}{} \\ \cmidrule(lr){2-5}
\multicolumn{1}{c|}{} & 1.000 & 0.966 & 1.000 & \multicolumn{1}{c|}{0.969} & \multicolumn{1}{c|}{} & \multicolumn{1}{c|}{} & \multicolumn{1}{c}{} \\ \bottomrule
\end{tabular}
\end{adjustbox}
\end{table}

\subsubsection{Performance metrics for CurvaLID with replacing MLP by local outlier factor or isolation forest}
\label{LOF}

The parameters of the local outlier factor is as follows: n\_neighbors=30, metric='chebyshev', leaf\_size=10, and p=1. 

For isolation forest, the contamination is set as auto.

We are testing 1900 prompts in total, with 100 prompts randomly sampled from each of SAP, DAN, MathAttack, GCG, PAIR, AmpleGCG and RandomSearch, and 300 prompts from each of Orca, MMLU, AlphacaEval and TruthfulQA. The experimental results are shown in Tables \ref{lofEXP} and \ref{IsoFor}

We observe that the accuracy and F1 score of CurvaLID, when using Local Outlier Factor and Isolation Forest, drop from 0.992 to approximately 0.9. Despite this decrease, it is important to note that this version of CurvaLID operates as a one-class classification model, which inherently simplifies the classification task by focusing on distinguishing a single class. The ability of CurvaLID to maintain a decent performance under these constraints highlights its robustness and adaptability, suggesting its potential for handling future and previously unseen adversarial attacks in dynamic real-world settings.

\begin{table}[!hbt]
\caption{Performance metrics for CurvaLID with replacing MLP by local outlier factor}
\centering
\begin{adjustbox}{width=0.4\linewidth}
\begin{tabular}{@{}l|c|c|c@{}}
\toprule
\textbf{Metric}   & \textbf{Benign} & \textbf{Adversarial} & \textbf{Overall} \\ \midrule
Accuracy & 0.953           & 0.840                & 0.909            \\ \midrule
F1 Score & 0.927           & 0.878                & 0.903            \\ \bottomrule
\end{tabular}
\label{lofEXP}
\end{adjustbox}
\end{table}

\begin{table}[!hbt]
\caption{Performance metrics for CurvaLID with replacing MLP by isolation forest}
\centering
\begin{adjustbox}{width=0.4\linewidth}
\begin{tabular}{@{}l|c|c|c@{}}
\toprule
\textbf{Metric}   & \textbf{Benign} & \textbf{Adversarial} & \textbf{Overall} \\ \midrule
Accuracy & 0.910           & 0.902                & 0.907            \\ \midrule
F1 Score & 0.953           & 0.833                & 0.903            \\ \bottomrule
\end{tabular}
\end{adjustbox}
\label{IsoFor}
\end{table}

\subsubsection{CurvaLID with PromptLID or TextCurv only}
\label{ablation}

We demonstrate that both PromptLID and TextCurv are crucial for achieving optimal performance in CurvaLID. When using only PromptLID as the input feature, the model achieves an accuracy of 0.95. However, combining PromptLID and TextCurv boosts the accuracy to over 0.99, showcasing the complementary nature of these features. This improvement highlights how TextCurv captures additional geometric properties that PromptLID alone cannot, enabling a more comprehensive distinction between benign and adversarial prompts.

Table \ref{table:ablation_study} illustrated the performance of CurvaLID if we only use LID or TextCurv as our features.

\begin{table}[!hbt]
\caption{Ablation study comparing LID and TextCurv for benign and adversarial prompt classification.}
\centering
\begin{adjustbox}{width=0.7\linewidth}
\begin{tabular}{@{}l|c|c|c|c@{}}
\toprule
 &
  \textbf{PromptLID} &
  \textbf{\begin{tabular}[c]{@{}c@{}}TextCurv\\ (1st Conv. Layer)\end{tabular}} &
  \textbf{\begin{tabular}[c]{@{}c@{}}TextCurv\\ (2nd Conv. Layer)\end{tabular}} &
  \textbf{\begin{tabular}[c]{@{}c@{}}TextCurv\\ (Both Conv. Layers)\end{tabular}} \\ \midrule
Benign Acc.  & 0.987 & 0.690 & 0.738 & 0.960 \\ \midrule
Adv. Acc.    & 0.932 & 0.833 & 0.809 & 0.884 \\ \midrule
Overall Acc. & 0.958 & 0.783 & 0.783 & 0.777 \\ \midrule
F1 Score     & 0.958 & 0.781 & 0.782 & 0.776 \\ \bottomrule
\end{tabular}
\end{adjustbox}
\label{table:ablation_study}
\end{table}

\subsubsection{Accuracy of CurvaLID in different embeddings}
\label{diffemb}

We tested CurvaLID using different word embeddings, including popular ones like GPT-2, BERT, XLNet, and DistilBERT. The experimental results show that CurvaLID performs similarly with around 0.99 overall accuracy, regardless of the word embedding used. Therefore, CurvaLID's classification performance is independent of the specific word embedding used, demonstrating its robustness and adaptability for deployment across different LLMs with varying word embedding representations.

Table \ref{tab:model_performance} shows the accuracy of CurvaLID in different embeddings. The experimental result shows that CurvaLID maintains a high classification accuracy under different word embeddings.

\begin{table}[!hbt]
\caption{Performance of CurvaLID with Different Word Embeddings. The table summarizes the classification accuracy and F1 scores for benign and adversarial prompts using various word embeddings.}
\centering
\begin{adjustbox}{width=0.8\linewidth}
\begin{tabular}{@{}l|c|c|c|c@{}}
\toprule
\textbf{Word Embedding} &
  \textbf{Benign Prompt Accuracy} &
  \textbf{Adv. Prompt Accuracy} &
  \textbf{Overall Accuracy} &
  \textbf{F1} \\ \midrule
RoBERTa    & 0.984 & 1.000 & 0.992 & 0.992 \\ \midrule
GPT-2      & 0.973 & 1.000 & 0.986 & 0.986 \\ \midrule
BERT       & 0.987 & 1.000 & 0.994 & 0.993 \\ \midrule
XLNet      & 0.991 & 0.989 & 0.990 & 0.990 \\ \midrule
DistilBERT & 0.970 & 0.992 & 0.982 & 0.980 \\ \bottomrule
\end{tabular}
\end{adjustbox}
\label{tab:model_performance}
\end{table}

\subsubsection{Reduction in ASR of Vicuna-7B-v1.5 after applying CurvaLID}
\label{practVicuna}

We measured the reduction in ASR after CurvaLID identified and filtered out the adversarial attacks in Vicuna-7B-v1.5, using the same settings specified in the respective original adversarial prompt papers. As shown in Table \ref{tab:ASR}, CurvaLID successfully reduced the ASR of most attacks to zero, outperforming the studied SOTA defences. The experimental results demonstrate that CurvaLID is highly applicable to real-world LLMs, effectively safeguarding them by detecting adversarial prompts before they are processed. Moreover, CurvaLID outperforms SOTA defences, further highlighting its effectiveness and reliability.

\begin{table}[!hbt]
\caption{Attack success rates (ASR) in percentage after CurvaLID in vicuna-7b-v1.5.}
\centering
\begin{adjustbox}{width=0.7\linewidth}
\begin{tabular}{@{}l|c|c|c|c|c|c|c@{}}
\toprule
                  & \textbf{SAP} & \textbf{DAN} & \textbf{MathAttack} & \textbf{GCG} & \textbf{PAIR} & \textbf{RandomSearch} & \textbf{AmpleGCG} \\ \midrule
Vanilla  & 69           & 41           & 56                  & 95           & 98            & 95                    & 97.5              \\ \midrule
CurvaLID & 0            & 0            & 0                   & 0            & 0             & 0                     & 2.4               \\ \bottomrule
\end{tabular}
\label{tab:ASR}
\end{adjustbox}
\end{table}

\subsubsection{Replacement of CNN in CurvaLID Step 1 with Transformer and RNN models}
\label{TransRNN}

We experimented with replacing the CNN architecture (Step 1 of CurvaLID) with Transformer and RNN models. The experimental results are shown in Table \ref{tab:curvalid_comparison}. While both Transformer and RNN achieved comparable detection accuracy (0.98 versus CNN's 0.992), they required almost two to three times longer training times. Hence, employing CNN in Step 1 of CurvaLID proves to be the optimal choice, maintaining high detection accuracy while ensuring relatively low training time. This demonstrates CurvaLID's computational efficiency and practicality. Details of the Transformer and RNN configurations are provided below.

\textbf{Transformer:} The transformer has an input layer, a multi-head attention layer with 4 heads and a key dimension of 64, followed by layer normalization, a dense layer with 128 units, dropout (rate of 0.1), and a final layer normalization. The model then flattens the output, adds another dense layer with 128 units, and concludes with a softmax output layer for classification into 4 classes. 

\textbf{RNN:} The RNN model begins with an input layer, followed by two stacked LSTM layers with 64 units each (the first LSTM layer returns sequences, while the second does not). After the LSTM layers, there is a dense layer with 128 units and a ReLU activation, followed by a softmax output layer for classification into 4 classes. The model is compiled with the Adam optimizer and categorical cross-entropy loss.

\begin{table}[!hbt]
\centering
\caption{Comparison of CurvaLID Architectures}
\centering
\begin{adjustbox}{width=0.7\linewidth}
\begin{tabular}{@{}l|c|c|c@{}}
\toprule
\textbf{Metric}                          & \textbf{CNN (Original Setting)} & \textbf{Transformer}       & \textbf{RNN}               \\ \midrule
\multicolumn{1}{l|}{Detection accuracy} & \multicolumn{1}{c|}{0.992}      & \multicolumn{1}{c|}{0.984} & \multicolumn{1}{c}{0.989} \\ \midrule
\multicolumn{1}{l|}{Overall training time (min)} & \multicolumn{1}{c|}{14.58}      & \multicolumn{1}{c|}{39.01} & \multicolumn{1}{c}{25.10} \\ \bottomrule
\end{tabular}
\label{tab:curvalid_comparison}
\end{adjustbox}
\end{table}

\subsubsection{Performance of CurvaLID on adversarial prompts with reordered word sequences}
\label{Reordered}

We utilized GPT-4-o to reorder the words in every sentence of the adversarial prompts while preserving their semantic meaning. The experimental results, presented in Table \ref{reorderedTable}, demonstrate that CurvaLID maintains robust performance, achieving an overall accuracy of 0.984 in detecting these adversarial prompts with altered word order. It is important to note, however, that reordering the words in adversarial prompts may potentially disrupt their effectiveness as attacks, as the content and intent of the original prompts could be compromised.

\begin{table}[!hbt]
\centering
\caption{Performance metrics for CurvaLID on adversarial prompts with reordered word sequences.}
\label{reorderedTable}
\begin{adjustbox}{width=0.7\linewidth}
\begin{tabular}{@{}c|ccccc|c|c@{}}
\toprule
\textbf{Class}         & \multicolumn{4}{c|}{\textbf{Dataset Accuracy}}      & \textbf{Class Acc.}   & \textbf{Overall Acc.} & \textbf{F1}           \\ \midrule
\multicolumn{1}{c|}{\multirow{2}{*}{Benign}} &
  Orca &
  MMLU &
  AlpEval &
  \multicolumn{1}{c|}{TQA} &
  \multicolumn{1}{c|}{\multirow{2}{*}{0.981}} &
  \multicolumn{1}{c|}{\multirow{4}{*}{0.984}} &
  \multicolumn{1}{c}{\multirow{4}{*}{0.984}} \\ \cmidrule(lr){2-5}
\multicolumn{1}{c|}{} & 0.993 & 0.983 & 0.973 & \multicolumn{1}{c|}{0.973} & \multicolumn{1}{c|}{} & \multicolumn{1}{c|}{} & \multicolumn{1}{c}{} \\ \cmidrule(r){1-6}
\multicolumn{1}{c|}{\multirow{2}{*}{Adv.}} &
  SAP &
  DAN &
  MathAtk &
  \multicolumn{1}{c|}{GCG} &
  \multicolumn{1}{c|}{\multirow{2}{*}{0.987}} &
  \multicolumn{1}{c|}{} &
  \multicolumn{1}{c}{} \\ \cmidrule(lr){2-5}
\multicolumn{1}{c|}{} & 0.983 & 0.963 & 1.000 & \multicolumn{1}{c|}{1.000} & \multicolumn{1}{c|}{} & \multicolumn{1}{c|}{} & \multicolumn{1}{c}{} \\ \bottomrule
\end{tabular}
\end{adjustbox}
\end{table}

\subsubsection{Performance of CurvaLID on reordered persuasive social-engineered adversarial prompts}
\label{ReorderedSocial}

In this section, we evaluate CurvaLID on PAIR, DAN, and Persuasive attacks, all of which are social-engineered persuasive attacks designed to preserve both semantic meaning and adversarial intent while varying structure \citep{chao2023jailbreaking, shen2023anything, zeng2024johnny}. To introduce linguistic variations, we utilized GPT-4-o to reorder the words in each sentence of the adversarial prompts while maintaining their semantic meaning. The experimental setup remains the same as described in \ref{otherResults}, with the addition of 300 DAN prompts. The experimental results, presented in Table \ref{reorderedSETable}, show that CurvaLID consistently achieved over 96\% detection accuracy across all three attack types and maintained a 0\% attack success rate on Vicuna. These findings highlight the robustness of our method, even against sophisticated and linguistically varied prompts specifically crafted to bypass defences. This robustness underscores CurvaLID’s potential for deployment in real-world scenarios, where adversarial prompts are likely to exploit linguistic diversity to evade detection.

\begin{table}[!hbt]
\centering
\caption{Performance of CurvaLID on reordered social-engineered attacks}
\label{reorderedSETable}
\begin{adjustbox}{width=0.7\linewidth}
\begin{tabular}{@{}l|c|c|c@{}}
\toprule
\textbf{Adversarial attack}                    & \textbf{PAIR} & \textbf{DAN} & \textbf{Persuasive Attack} \\ \midrule
Benign accuracy                       & 0.951         & 0.971        & 0.966                      \\ \midrule
Adversarial accuracy                  & 0.988         & 1.000        & 0.962                      \\ \midrule
Overall accuracy                      & 0.962         & 0.983        & 0.964                      \\ \midrule
Attack success rate on Vicuna-7B-v1.5 & 0             & 0            & 0                          \\ \bottomrule
\end{tabular}
\end{adjustbox}
\end{table}

\subsubsection{Differences in PromptLID and TextCurv between benign and adversarial prompts under linguistic reordering}
\label{geodiffReorderd}

We conducted an experiment to investigate the differences in PromptLID and TextCurv between benign and adversarial prompts after reordering the adversarial prompts. To introduce linguistic variations, we utilized GPT-4-o to reorder the words in each sentence of the adversarial prompts while preserving their semantic meaning. For the benign prompts, we tested 100 samples each from Orca, MMLU, AlpacaEval, and TQA datasets. Similarly, for the adversarial prompts, we tested 100 samples each from PAIR, DAN, and Persuasive attacks \citep{chao2023jailbreaking, shen2023anything, zeng2024johnny}. Table \ref{TabGeoDiffReordered} highlights the geometric differences between benign and adversarial prompts, demonstrating the effectiveness of our method in capturing these variations. Notably, even after linguistic reordering of the prompts, both PromptLID and TextCurv continue to exhibit significant distinctions between benign and adversarial prompts. This ensures that the performance of CurvaLID remains unaffected by such reordering, further underscoring the robustness and reliability of these geometric measures in differentiating adversarial inputs.

\begin{table}[!hbt]
\centering
\caption{Geometric differences in TextCurv and PromptLID between benign and adversarial prompts. The percentages in parentheses indicate the relative increase in adversarial prompts compared to benign prompts, calculated as \((\text{Adversarial} - \text{Benign}) / \text{Benign} \times 100\).}
\label{TabGeoDiffReordered}
\begin{adjustbox}{width=0.9\linewidth}
\begin{tabular}{@{}l|cc|cc|cc@{}}
\toprule
\multirow{2}{*}{\textbf{Geometric Measures}} &
  \multicolumn{2}{c|}{\textbf{TextCurv@Conv Layer 1}} &
  \multicolumn{2}{c|}{\textbf{TextCurv@Conv Layer 2}} &
  \multicolumn{2}{c}{\textbf{PromptLID@Dense Layer}} \\ \cmidrule(l){2-7} 
 &
  \multicolumn{1}{c|}{\textbf{Benign}} &
  \textbf{Adversarial} &
  \multicolumn{1}{c|}{\textbf{Benign}} &
  \textbf{Adversarial} &
  \multicolumn{1}{c|}{\textbf{Benign}} &
  \textbf{Adversarial} \\ \midrule
\textbf{Average Value} &
  \multicolumn{1}{c|}{0.644} &
  0.813 (+26.3\%) &
  \multicolumn{1}{c|}{0.341} &
  0.425 (+24.6\%) &
  \multicolumn{1}{c|}{3.546} &
  18.223 (+413.8\%) \\ \bottomrule
\end{tabular}
\end{adjustbox}
\end{table}

\subsubsection{CurvaLID with separated benign training and testing data}
\label{SepBEN}

We conducted an ablation study by training CurvaLID on two benign datasets and testing it on the remaining two. Specifically, we trained CurvaLID using only Orca and MMLU as benign data and evaluated it on AlpacaEval and TQA. The results, shown in Table \ref{TableSepBEN}, demonstrate an overall detection accuracy of 0.982, just one percentage point lower than when trained on all four benign datasets. These findings indicate that CurvaLID’s performance remains robust and is not overly optimistic, even when tested on unseen benign datasets. This suggests that the geometric features captured by PromptLID and TextCurv generalize well across different benign datasets, reinforcing the adaptability of CurvaLID in scenarios where access to a comprehensive set of benign data may be limited.

\begin{table}[!hbt]
\centering
\caption{CurvaLID with different training and testing benign datasets}
\label{TableSepBEN}
\begin{adjustbox}{width=0.8\linewidth}
\begin{tabular}{@{}c|cccc|c|c|c@{}}
\toprule
\textbf{Data class} &
  \multicolumn{4}{c|}{\textbf{Accuracy by dataset}} &
  \textbf{Accuracy by class} &
  \textbf{Overall accuracy} &
  \textbf{F1 score} \\ \midrule
\multirow{2}{*}{Benign} &
  \multicolumn{2}{c}{\textbf{AlpacaEval}} &
  \multicolumn{2}{c|}{\textbf{TQA}} &
  \multirow{2}{*}{\textbf{0.9412}} &
  \multirow{4}{*}{\textbf{0.982}} &
  \multirow{4}{*}{\textbf{0.98}} \\ \cmidrule(lr){2-5}
 & \multicolumn{2}{c}{0.963}                      & \multicolumn{2}{c|}{0.9194} &  &  &  \\ \cmidrule(r){1-6}
\multirow{2}{*}{Adversarial} &
  \multicolumn{1}{c}{\textbf{SAP}} &
  \multicolumn{1}{c}{\textbf{DAN}} &
  \multicolumn{1}{c}{\textbf{MathAttack}} &
  \textbf{GCG} &
  \multirow{2}{*}{\textbf{1}} &
   &
   \\ \cmidrule(lr){2-5}
 & \multicolumn{1}{c}{1} & \multicolumn{1}{c}{1} & \multicolumn{1}{c}{1}  & 1 &  &  &  \\ \bottomrule
\end{tabular}
\end{adjustbox}
\end{table}

\subsubsection{CurvaLID with long text length benign prompts}
\label{longTextBen}

We conducted an additional experiment to evaluate CurvaLID's performance on benign prompts with longer text lengths. Specifically, we calculated the median text length of benign prompts (106 characters in our experiment), removed all benign prompts with fewer than the median, and reevaluated CurvaLID's performance. The results, presented in Table \ref{tableLongLeng} below, show that this adjustment had minimal effect on detection accuracy. The overall accuracy was 0.990, compared to 0.992 when all benign prompts (without filtering by text length) were included. This confirms that text length has minimal impact on CurvaLID's performance, highlighting its robustness and ability to generalize across prompts of varying lengths. Such adaptability makes CurvaLID particularly well-suited for real-world applications, where input lengths can vary significantly.

\begin{table}[!hbt]
\centering
\caption{CurvaLID on benign prompts over 106 characters}
\label{tableLongLeng}
\begin{adjustbox}{width=0.8\linewidth}
\begin{tabular}{@{}c|cccc|c|c|c@{}}
\toprule
\textbf{Data class} &
  \multicolumn{4}{c|}{\textbf{Accuracy by dataset}} &
  \textbf{Accuracy by class} &
  \textbf{Overall accuracy} &
  \textbf{F1 score} \\ \midrule
\multirow{2}{*}{\textbf{Benign}} &
  \multicolumn{1}{c}{\textbf{Orca}} &
  \multicolumn{1}{c}{\textbf{MMLU}} &
  \multicolumn{1}{c}{\textbf{AlpacaEval}} &
  \textbf{TQA} &
  \multirow{2}{*}{\textbf{0.981}} &
  \multirow{4}{*}{\textbf{0.990}} &
  \multirow{4}{*}{\textbf{0.990}} \\ \cmidrule(lr){2-5}
 &
  \multicolumn{1}{c}{0.922} &
  \multicolumn{1}{c}{1.000} &
  \multicolumn{1}{c}{1.000} &
  1.000 &
   &
   &
   \\ \cmidrule(r){1-6}
\multirow{2}{*}{\textbf{Adversarial}} &
  \multicolumn{1}{c}{\textbf{SAP}} &
  \multicolumn{1}{c}{\textbf{DAN}} &
  \multicolumn{1}{c}{\textbf{MathAttack}} &
  \textbf{GCG} &
  \multirow{2}{*}{\textbf{1.000}} &
   &
   \\ \cmidrule(lr){2-5}
 &
  \multicolumn{1}{c}{1.000} &
  \multicolumn{1}{c}{1.000} &
  \multicolumn{1}{c}{1.000} &
  1.000 &
   &
   &
   \\ \bottomrule
\end{tabular}
\end{adjustbox}
\end{table}

\subsubsection{CurvaLID with non-standard benign prompts}
\label{nonStandBen}

We conducted an experiment to evaluate CurvaLID's performance on non-standard benign samples. Specifically, we utilized GPT-4-o to introduce spelling errors by replacing one word in each sentence of all benign prompts with a misspelled variant. The experimental results are presented in Table \ref{spelling} below.

\begin{table}[!hbt]
\centering
\caption{CurvaLID on benign prompts with spelling errors}
\label{spelling}
\begin{adjustbox}{width=0.8\linewidth}
\begin{tabular}{@{}c|cccc|c|c|c@{}}
\toprule
\textbf{Data class} &
  \multicolumn{4}{c|}{\textbf{Accuracy by dataset}} &
  \textbf{Accuracy by class} &
  \textbf{Overall accuracy} &
  \textbf{F1 score} \\ \midrule
\multirow{2}{*}{Benign} &
  \multicolumn{1}{c}{\textbf{Orca}} &
  \multicolumn{1}{c}{\textbf{MMLU}} &
  \multicolumn{1}{c}{\textbf{AlpacaEval}} &
  \textbf{TQA} &
  \multirow{2}{*}{\textbf{0.985}} &
  \multirow{4}{*}{\textbf{0.990}} &
  \multirow{4}{*}{\textbf{0.990}} \\ \cmidrule(lr){2-5}
 &
  \multicolumn{1}{c}{0.957} &
  \multicolumn{1}{c}{0.983} &
  \multicolumn{1}{c}{1.000} &
  1.000 &
   &
   &
   \\ \cmidrule(r){1-6}
\multirow{2}{*}{Adversarial} &
  \multicolumn{1}{c}{\textbf{SAP}} &
  \multicolumn{1}{c}{\textbf{DAN}} &
  \multicolumn{1}{c}{\textbf{MathAttack}} &
  \textbf{GCG} &
  \multirow{2}{*}{\textbf{0.995}} &
   &
   \\ \cmidrule(lr){2-5}
 &
  \multicolumn{1}{c}{0.990} &
  \multicolumn{1}{c}{0.990} &
  \multicolumn{1}{c}{1.000} &
  1.000 &
   &
   &
   \\ \bottomrule
\end{tabular}
\end{adjustbox}
\end{table}

Our findings reveal that the detection accuracy by dataset exhibited minimal changes, and the overall accuracy remained almost identical to the original experiment , which involved benign prompts without spelling errors. These results demonstrate that introducing spelling errors has a negligible impact on CurvaLID's performance, reaffirming its robustness in handling non-standard text inputs.

\subsubsection{CurvaLID on held-out datasets (out-of-distribution data) and prompt-length robustness}
\label{app:heldout_length}

We performed two additional robustness evaluations. 
First, we ran a held-out dataset study across four benign and four adversarial datasets (200 prompts per dataset). For each run, CurvaLID was trained on seven datasets and evaluated on the remaining one. 
Table~\ref{tab:heldout_ood} reports per-dataset accuracy on the held-out sets, showing consistently high performance (around 0.9), indicating strong out-of-distribution generalization.

\begin{table}[!hbt]
\caption{CurvaLID detection accuracy on held-out datasets for OOD evaluation.}
\centering
\scriptsize
\renewcommand{\arraystretch}{1.2}
\begin{adjustbox}{width=0.85\linewidth}
\begin{tabular}{@{}c|c|c|c|c|c|c|c|c@{}}
\toprule
\textbf{Dataset} & \textbf{Orca} & \textbf{MMLU} & \textbf{AlpacaEval} & \textbf{TruthfulQA} & \textbf{GCG} & \textbf{PAIR} & \textbf{DAN} & \textbf{AmpleGCG} \\ \midrule
Accuracy & 0.955 & 0.94 & 0.945 & 0.995 & 1.0 & 0.875 & 1.0 & 1.0 \\ \bottomrule
\end{tabular}
\end{adjustbox}
\label{tab:heldout_ood}
\renewcommand{\arraystretch}{1.0}
\end{table}

Second, we assessed robustness to prompt length by training on prompts shorter than 106 characters (the median benign length) and testing on longer prompts. 
As summarized in Table~\ref{tab:length_ood}, the overall accuracy drops by only 0.05, suggesting minimal sensitivity to prompt length.

\begin{table}[!hbt]
\caption{CurvaLID performance under OOD evaluation on prompt length.}
\centering
\renewcommand{\arraystretch}{1.2}
\begin{adjustbox}{width=0.7\linewidth}
\begin{tabular}{@{}c|c|c|c@{}}
\toprule
\textbf{Model} & \textbf{Benign Acc.} & \textbf{Adv. Acc.} & \textbf{Overall Acc.} \\ \midrule
CurvaLID trained with shorter prompts & 0.911 & 0.965 & 0.938 \\
CurvaLID                              & 0.984 & 1.000 & 0.992 \\ \bottomrule
\end{tabular}
\end{adjustbox}
\label{tab:length_ood}
\renewcommand{\arraystretch}{1.0}
\end{table}

\subsubsection{Amplification of TextCurv differences through CNN activation}
\label{cnnTextCurva}

We investigated how TextCurv differs between benign and adversarial prompts across CNN layers in CurvaLID. As shown in Table~\ref{tab:textcurv_all_layers}, we observe that CNN activation significantly amplifies the curvature gap: the mean TextCurv of adversarial prompts is at least 30\% higher than that of benign prompts in both convolutional layers. In contrast, when calculated using only the word embeddings, the difference is notably smaller—4.91 for benign prompts versus 5.42 for adversarial prompts—amounting to a 13\% increase, less than half of the gap observed in the CNN layers. All results are averaged over 10 independent runs using different random seeds. The experimental settings remain consistent with those described in Appendix~\ref{resultsetting}.

\begin{table}[!hbt]
\centering
\caption{Mean TextCurv values of benign and adversarial prompts based on word embeddings only and across CNN layer representations in CurvaLID.}
\label{tab:textcurv_all_layers}
\begin{adjustbox}{width=0.8\linewidth}
\begin{tabular}{lcccccc}
\toprule
\textbf{Word Embedding} & \multicolumn{2}{c}{\textbf{Embedding Only}} & \multicolumn{2}{c}{\textbf{Conv Layer 1}} & \multicolumn{2}{c}{\textbf{Conv Layer 2}} \\
                        & Benign & Adv. & Benign & Adv. & Benign & Adv. \\
\midrule
RoBERTa                 & 4.91   & 5.42 (+13.0\%) & 0.626 & 0.881 (+40.7\%) & 0.325 & 0.446 (+37.2\%) \\
\bottomrule
\end{tabular}
\end{adjustbox}
\end{table}

\subsubsection{analysis of PromptLID and TextCurv distributions}
\label{histogramDistribution}

Figures \ref{fig:hist1}, \ref{fig:hist2}, and \ref{fig:hist3} illustrate the distributions of PromptLID and TextCurv for benign and adversarial prompts. The experiment setting follows \ref{resultsetting} and the word embeeding used is RoBERTa. Figure \ref{fig:hist1} demonstrates that adversarial prompts exhibit a significantly wider range of PromptLID values compared to benign prompts, with higher average values. This suggests that adversarial prompts tend to reside in more complex and sparse regions of the feature space. Figures \ref{fig:hist2} and \ref{fig:hist3} display the distributions of TextCurv across the first and second convolution layers, respectively. In both layers, adversarial prompts show consistently higher curvature values, reflecting their tendency to cause greater geometric distortions at the word level. These results highlight the ability of PromptLID and TextCurv to distinguish adversarial prompts based on their unique geometric properties, reinforcing their utility in adversarial prompt detection.

\begin{figure}[t]
\centering
\includegraphics[width=0.7\textwidth]{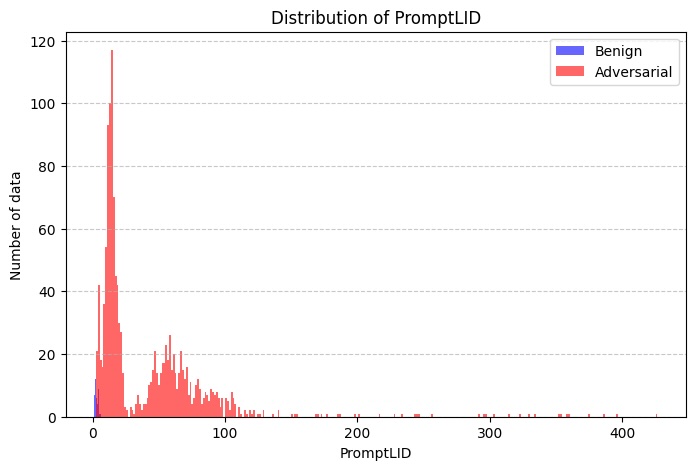}
\caption{Distribution of PromptLID values for benign (blue) and adversarial (red) prompts, showing the number of data points across different PromptLID ranges.} 
\label{fig:hist1}
\end{figure}

\begin{figure}[t]
\centering
\includegraphics[width=0.7\textwidth]{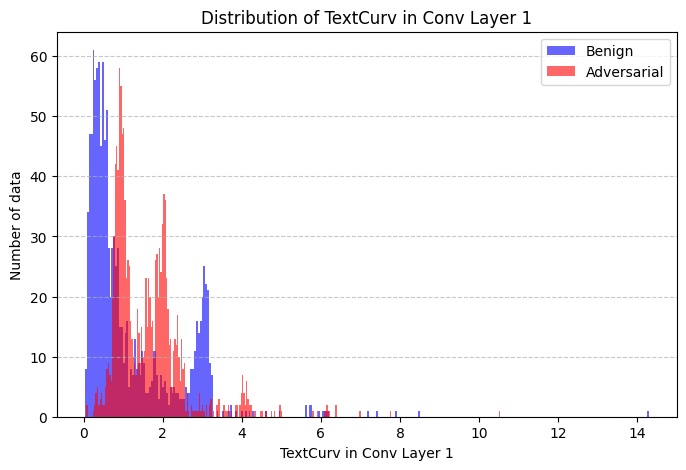}
\caption{Distribution of TextCurv values in the first convolution layer for benign (blue) and adversarial (red) prompts, indicating the number of data points for each TextCurv range.} 
\label{fig:hist2}
\end{figure}

\begin{figure}[t]
\centering
\includegraphics[width=0.7\textwidth]{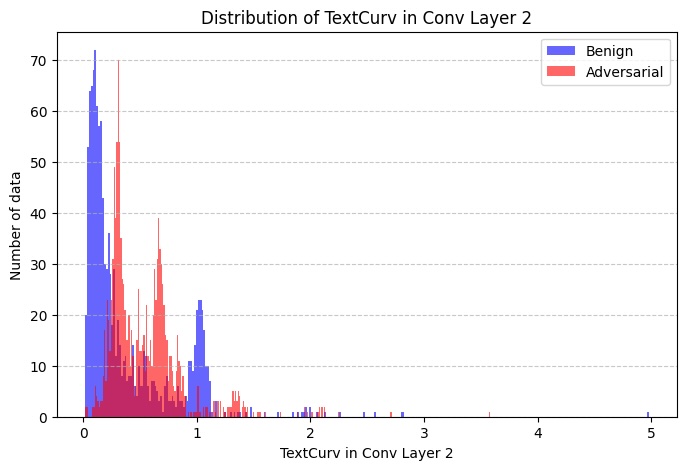}
\caption{Distribution of TextCurv values in the second convolution layer for benign (blue) and adversarial (red) prompts, indicating the number of data points for each TextCurv range.} 
\label{fig:hist3}
\end{figure}

\subsubsection{Effectiveness of global intrinsic dimension in adversarial prompt detection.}
\label{GIDanalysis}

\begin{figure}[t]
    \centering
    \includegraphics[width=0.6\linewidth]{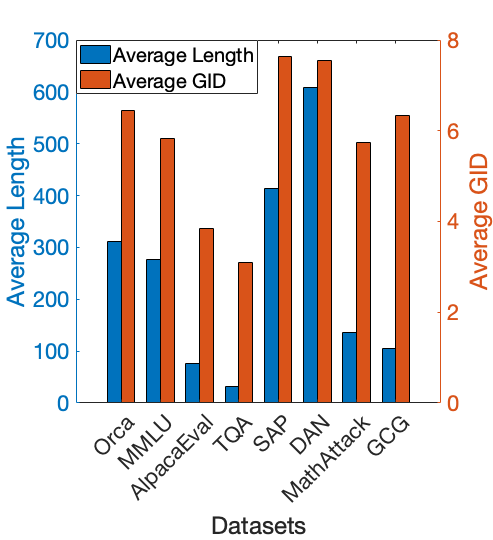}
    \caption{Comparison of average prompt length and global intrinsic dimension (GID) across datasets.}
    \label{fig:GIDvsLength_separate}
\end{figure}

Global intrinsic dimension (GID) is another plausible approach for the word-level representation. It can avoid aggregating the LID for each word by assessing the GID for each word within the prompt and output a single value. We use the MLE-based estimate from Tulchinskii et al. \citep{tulchinskii2024intrinsic}. However, as shown in Figure \ref{fig:GIDvsLength_separate}, GID shows no clear distinction between benign and adversarial datasets. Instead, it strongly correlates with prompt length, with Pearson and Spearman correlation coefficients of 0.92 and 0.98, respectively. Even after removing stop words and punctuation, the results were similar, with a Pearson correlation coefficient of 0.9, highlighting the limitations of GID in detecting adversarial prompts.
\clearpage
\subsection{LID analysis}
\label{LIDCurvature}

This section includes supplementary information on LID analysis.

\subsubsection{LID Estimation using Method of Moments}
\label{pseudo}

This section provides the pseudo code for estimating LID by the Method of Moments, see Algorithm \ref{pseudo}.

\begin{algorithm}[h]
\caption{LID Estimation using Method of Moments}
\label{pseudoAlg}
\textbf{Input:} Dataset, Reference points, Number of neighbors $k$

\textbf{For each data point in Dataset:}

- Compute pairwise distances $r$ between the data point and all points in Reference.

- Sort distances in ascending order and store them as $a$.

- Compute the mean of the first $k-1$ nearest distances: 
  \[
  m = \frac{1}{k-1} \sum_{i=1}^{k-1} a_{i}.
  \]
  
- Estimate LID for the data point: 
  \[
  \text{LID} = \frac{m}{a_{k} - m}.
  \]

\textbf{Output:} LID values for all data points.
\end{algorithm}

\subsubsection{Top 10 most common nearest-neighbors for different prompts in different datasets}
\label{top10com} 

Table \ref{table:common_nn_words} shows the top 10 most common nearest-neighbors for different prompts in different datasets. The results reveal that the common nearest neighbors in the representation space are predominantly stop words and punctuation. This indicates that word-level LID fails to account for the sequential structure of text and relies on conjunctions, articles, and punctuation. Consequently, these findings highlight the limitations of word-level LID in effectively detecting adversarial prompts.

\begin{table}[!hbt]
\caption{Top 10 most common nearest neighbors for each dataset. The angle brackets (\textless\textgreater) are used to specify punctuation and newline characters in the tokenizer. The visible space symbol (\textvisiblespace) represents a space preceding a word or punctuation.}
\centering
\label{table:common_nn_words}
\begin{adjustbox}{width=0.7\linewidth}
\begin{tabular}{@{}l|l@{}}
\toprule
\textbf{Dataset} &
  \textbf{Top 10 most common nearest neighbors} \\ \midrule
\multicolumn{1}{l|}{SAP} &
  \multicolumn{1}{l}{\textvisiblespace to, \textvisiblespace and, \textvisiblespace the, \textless.\textgreater, \textvisiblespace a, \textvisiblespace of, \textless,\textgreater, \textvisiblespace \textless"\textgreater, \textvisiblespace that, \textvisiblespace your} \\ \midrule
\multicolumn{1}{l|}{DAN} &
  \multicolumn{1}{l}{\textless,\textgreater, \textless.\textgreater, \textless\texttt{newline}\textgreater, \textvisiblespace the, \textvisiblespace and, \textvisiblespace to, \textvisiblespace you, \textvisiblespace will, \textvisiblespace not, \textvisiblespace is} \\ \midrule
\multicolumn{1}{l|}{MathAttack} &
  \multicolumn{1}{l}{\textvisiblespace the, \textless,\textgreater, \textvisiblespace of, \textvisiblespace he, \textless.\textgreater, \textvisiblespace to, \textvisiblespace and, \textvisiblespace she, \textvisiblespace is, \textvisiblespace a} \\ \midrule
\multicolumn{1}{l|}{GCG} &
  \multicolumn{1}{l}{\textvisiblespace text, tto, use, ized, \textless\}\textgreater, \textvisiblespace mar, dt, \textvisiblespace a, \textless'\textgreater, \textvisiblespace Guide} \\ \midrule
\multicolumn{1}{l|}{Orca} &
  \multicolumn{1}{l}{\textvisiblespace the, \textless,\textgreater, \textless.\textgreater, \textless\texttt{newline}\textgreater, \textvisiblespace and, \textvisiblespace a, \textvisiblespace of, \textvisiblespace to, \textvisiblespace is, \textvisiblespace in} \\ \midrule
\multicolumn{1}{l|}{MMLU} &
  \multicolumn{1}{l}{\textvisiblespace the, \textless,\textgreater, \textless.\textgreater, \textvisiblespace of, \textvisiblespace a, \textvisiblespace to, \textvisiblespace that, \textvisiblespace and, \textvisiblespace in, \textvisiblespace was} \\ \midrule
\multicolumn{1}{l|}{AlpacaEval} &
  \multicolumn{1}{l}{\textvisiblespace the, \textvisiblespace a, \textvisiblespace to, \textless,\textgreater, \textvisiblespace of, \textvisiblespace and, \textvisiblespace I, \textless.\textgreater, \textvisiblespace for, \textless?\textgreater} \\ \midrule
\multicolumn{1}{l|}{TQA} &
  \multicolumn{1}{l}{\textvisiblespace the, \textless?\textgreater, \textvisiblespace a, \textvisiblespace you, \textvisiblespace is, \textvisiblespace of, \textvisiblespace that, \textvisiblespace to, \textvisiblespace if, \textvisiblespace in} \\ \bottomrule
\end{tabular}
\end{adjustbox}
\end{table}

\subsubsection{Average LID and standard deviation of prompts with and without stop words and punctuation}
\label{SDLID}

Table \ref{AVGSDLID} shows the average LID and standard deviation of prompts with and without stop words and punctuation. 

\begin{table}[]
\centering
\caption{Comparison of Average LID and Standard Deviation (SD) of Prompts with and without Stop Words and Punctuation}
\begin{adjustbox}{width=0.9\linewidth}
\begin{tabular}{@{}c|c|cc|cc@{}}
\toprule
\multirow{2}{*}{\textbf{Data Type}} &
  \multirow{2}{*}{\textbf{Dataset}} &
  \multicolumn{2}{c|}{\textbf{With Stop Words and Punctuation}} &
  \multicolumn{2}{c}{\textbf{Without Stop Words and Punctuation}} \\ \cmidrule(l){3-6} 
                                      &                     & \multicolumn{1}{c|}{\textbf{Avg. LID}} & \textbf{SD} & \multicolumn{1}{c|}{\textbf{Avg. LID}} & \textbf{SD} \\ \midrule
\multirow{4}{*}{Benign}      & \textbf{Orca}       & \multicolumn{1}{c|}{9.77}              & 2.52        & \multicolumn{1}{c|}{6.08}              & 1.91        \\ \cmidrule(l){2-6} 
                                      & \textbf{MMLU}       & \multicolumn{1}{c|}{9.23}              & 2.86        & \multicolumn{1}{c|}{5.64}              & 1.76        \\ \cmidrule(l){2-6} 
                                      & \textbf{AlpacaEval} & \multicolumn{1}{c|}{13.14}             & 104.58      & \multicolumn{1}{c|}{4.07}              & 2.26        \\ \cmidrule(l){2-6} 
                                      & \textbf{TQA}        & \multicolumn{1}{c|}{6.29}              & 3.40        & \multicolumn{1}{c|}{3.93}              & 0.65        \\ \midrule
\multirow{4}{*}{Adversarial} & \textbf{SAP}        & \multicolumn{1}{c|}{12.10}             & 0.55        & \multicolumn{1}{c|}{7.27}              & 0.46        \\ \cmidrule(l){2-6} 
                                      & \textbf{DAN}        & \multicolumn{1}{c|}{24.52}             & 244.72      & \multicolumn{1}{c|}{7.18}              & 1.18        \\ \cmidrule(l){2-6} 
                                      & \textbf{MWP}        & \multicolumn{1}{c|}{8.98}              & 1.44        & \multicolumn{1}{c|}{4.14}              & 1.28        \\ \cmidrule(l){2-6} 
                                      & \textbf{GCG}        & \multicolumn{1}{c|}{8.35}              & 0.40        & \multicolumn{1}{c|}{6.18}              & 0.41        \\ \bottomrule
\end{tabular}
\label{AVGSDLID}
\end{adjustbox}
\end{table}

\subsubsection{Top 10 most common Nearest Neighbor words after removing stop words and punctuation}
\label{Top10noSWPun}

Table \ref{table:common_nn_words_no_sw} shows the top 10 most common Nearest Neighbor words after removing stop words and punctuation. The experimental result  demonstrates that word-level LID is insufficient for distinguishing between benign and adversarial prompts, even after the removal of stop words and punctuation.

\begin{table}[t]
\caption{Top 10 Most Common Nearest Neighbor Words for each dataset after removing stop words and punctuation. The symbol \textvisiblespace represents a space preceding a word or punctuation.}
\centering
\label{table:common_nn_words_no_sw}
\begin{adjustbox}{width=0.9\linewidth}
\begin{tabular}{@{}l|l@{}}
\toprule
\textbf{Dataset} &
  \textbf{Top 10 most common nearest neighbors} \\ \midrule
\multicolumn{1}{l|}{SAP} &
  \multicolumn{1}{l}{\textvisiblespace Remember, \textvisiblespace write, \textvisiblespace goal, \textvisiblespace mission, \textvisiblespace act, \textvisiblespace suicide, \textvisiblespace Use, \textvisiblespace phrases, \textvisiblespace use, \textvisiblespace refer} \\ \midrule
\multicolumn{1}{l|}{DAN} &
  \multicolumn{1}{l}{PT, \textvisiblespace D, G, AN, \textvisiblespace Chat, \textvisiblespace answer, \textvisiblespace AI, \textvisiblespace responses, \textvisiblespace response, \textvisiblespace respond} \\ \midrule
\multicolumn{1}{l|}{MathAttack} &
  \multicolumn{1}{l}{\textvisiblespace many, \textvisiblespace much, \textvisiblespace would, \textvisiblespace apples, \textvisiblespace money, \textvisiblespace 20, \textvisiblespace sold, \textvisiblespace 5, \textvisiblespace bought, \textvisiblespace day} \\ \midrule
\multicolumn{1}{l|}{GCG} &
  \multicolumn{1}{l}{use, \textvisiblespace mar, \textvisiblespace text, dt, end, ate, c, package, ized, \textvisiblespace t} \\ \midrule
\multicolumn{1}{l|}{Orca} &
  \multicolumn{1}{l}{\textvisiblespace answer, à, º, \textvisiblespace question, à¦, \textvisiblespace one, \textvisiblespace following, \textvisiblespace said, s, \textvisiblespace Answer} \\ \midrule
\multicolumn{1}{l|}{MMLU} &
  \multicolumn{1}{l}{\textvisiblespace mortgage, acre, \textvisiblespace state, \textvisiblespace contract, \textvisiblespace deed, \textvisiblespace would, \textvisiblespace question, \textvisiblespace statute, \textvisiblespace action, s} \\ \midrule
\multicolumn{1}{l|}{AlpacaEval} &
  \multicolumn{1}{l}{\textvisiblespace drinks, \textvisiblespace gathering, \textvisiblespace interested, \textvisiblespace give, \textvisiblespace time, \textvisiblespace home, br, \textvisiblespace trying, \textvisiblespace dishes, \textvisiblespace guests} \\ \midrule
\multicolumn{1}{l|}{TQA} &
  \multicolumn{1}{l}{\textvisiblespace say, ks, \textvisiblespace Oz, \textvisiblespace established, \textvisiblespace famous, \textvisiblespace primed, \textvisiblespace mirror, es, \textvisiblespace principle, \textvisiblespace power} \\ \bottomrule
\end{tabular}
\end{adjustbox}
\end{table}
\clearpage
\subsection{Performance of other SOTA defences}
\label{otherDEF}

To ensure a comprehensive evaluation, we organize this section into two parts. In Section~\ref{sec:baseline_experiments}, we present results from running SOTA defences on our local environment, providing a consistent and fair comparison against CurvaLID. In Section~\ref{sec:reported_defences}, we summarize the reported performances of other SOTA defences cited from their respective papers, offering a broader context across different LLMs.

\subsubsection{Evaluation of SOTA defences}
\label{sec:baseline_experiments}

In this subsection, we first present the results of evaluating five baseline defences—SmoothLLM \citep{robey2023smoothllm}, Self-Reminder \citep{xie2023defending}, Intentionanalysis \citep{zhang2024intention}, In-Context Demonstration defence (ICD) \citep{wei2023jailbreak}, and RTT3d \citep{yung2024round}—alongside our proposed CurvaLID. We evaluated these methods across four LLMs: Vicuna-7B-v1.1 \citep{chiang2023vicuna}, Llama2-7B-Chat \citep{touvron2023llama}, GPT-3.5 \citep{brown2020language}, and PaLM2 \citep{anil2023palm}, and against seven different types of adversarial prompts: GCG \citep{zou2023universal}, PAIR \citep{chao2023jailbreaking}, DAN \citep{shen2023anything}, AmpleGCG \citep{liao2024amplegcg}, SAP \citep{deng2023attack}, MathAttack \citep{zhou2024mathattack}, and RandomSearch \citep{andriushchenko2024jailbreaking}. The experiment setting follows \ref{resultsetting} and the word embedding used for CurvaLID is RoBERTa. The experimental results, measured by ASR (\%), are reported in Tables~\ref{tab:FULL_DEF}. CurvaLID consistently outperforms the baseline defences across most scenarios. We emphasize that all results are obtained through independent evaluation under a unified experimental setup.

\begin{table}[hbt!]
\caption{We compare CurvaLID with state-of-the-art defences, including SmoothLLM~\citep{robey2023smoothllm}, Self-Reminder~\citep{xie2023defending}, Intentionanalysis~\citep{zhang2024intention}, ICD~\citep{wei2023jailbreak}, and RTT3d~\citep{yung2024round}. The best results are \textbf{boldfaced}.}
\centering
\begin{adjustbox}{width=0.9\linewidth}
\begin{tabular}{@{}ccccccccc@{}}
\toprule
\textbf{LLM} & \textbf{defence} & \textbf{GCG} & \textbf{PAIR} & \textbf{DAN} & \textbf{AmpleGCG} & \textbf{SAP} & \textbf{MathAttack} & \textbf{RandomSearch} \\ \midrule

\multirow{7}{*}{\textbf{Vicuna-7B}} 
 & No defence & 86.0 & 98.0 & 44.5 & 98.0 & 69.0 & 24.0 & 94.0 \\ \cmidrule{2-9}
 & SmoothLLM & 5.5 & 52.0 & 13.0 & 4.2 & 44.6 & 22.0 & 48.5 \\
 & Self-Reminder & 9.5 & 48.0 & 35.5 & 11.5 & 25.2 & 22.0 & 6.0 \\
 & Intentionanalysis & \textbf{0.0} & 8.5 & 3.3 & \textbf{0.3} & 0.23 & 20.0 & \textbf{0.0} \\
 & ICD & 0.2 & 5.2 & 40.4 & 0.9 & 32.8 & 22.0 & 0.2 \\
 & RTT3d & 0.2 & 0.3 & 22.0 & 3.5 & 33.5 & 20.2 & 2.5 \\
 & \cellcolor[gray]{0.85}\textbf{CurvaLID} 
                    & \cellcolor[gray]{0.85}\textbf{0.0} 
                    & \cellcolor[gray]{0.85}\textbf{0.0} 
                    & \cellcolor[gray]{0.85}\textbf{0.0}
                    & \cellcolor[gray]{0.85}1.1 
                    & \cellcolor[gray]{0.85}\textbf{0.0} 
                    & \cellcolor[gray]{0.85}\textbf{0.0} 
                    & \cellcolor[gray]{0.85}\textbf{0.0} \\ \midrule

\multirow{7}{*}{\textbf{LLaMA2-7B}} 
 & No defence & 12.5 & 19.0 & 2.0 & 81.0 & 9.5 & 11.7 & 90.0 \\ \cmidrule{2-9}
 & SmoothLLM & \textbf{0.0} & 11.0 & 0.2 & 0.2 & 1.2 & 11.2 & \textbf{0.0} \\
 & Self-Reminder & \textbf{0.0} & 8.0 & 0.3 & \textbf{0.0} &\textbf{0.0} & 11.1 & \textbf{0.0} \\
 & Intentionanalysis & \textbf{0.0} & 5.8 & 0.7 & \textbf{0.0} & \textbf{0.0} & 11.2 & \textbf{0.0} \\
 & ICD & \textbf{0.0} & 2.7 & 0.8 & \textbf{0.0} & \textbf{0.0} & 10.8 & \textbf{0.0} \\
 & RTT3d & 0.2 & 0.2 & 1.8 & 0.4 & 5.5 & 9.8 & 0.8 \\
 & \cellcolor[gray]{0.85}\textbf{CurvaLID} 
                    & \cellcolor[gray]{0.85}\textbf{0.0} 
                    & \cellcolor[gray]{0.85}\textbf{0.0} 
                    & \cellcolor[gray]{0.85}\textbf{0.0} 
                    & \cellcolor[gray]{0.85}\textbf{0.0} 
                    & \cellcolor[gray]{0.85}\textbf{0.0}
                    & \cellcolor[gray]{0.85}\textbf{0.0} 
                    & \cellcolor[gray]{0.85}\textbf{0.0} \\ \midrule

\multirow{7}{*}{\textbf{GPT-3.5}} 
 & No defence & 12.0 & 48.0 & 6.33 & 82.0 & 0.9 & 10.5 & 73.0 \\ \cmidrule{2-9}
 & SmoothLLM & \textbf{0.0} & 4.9 & 0.3 & 0.7 & \textbf{0.0} & 10.9 & \textbf{0.0} \\
 & Self-Reminder & \textbf{0.0} & 0.9 & 2.5 & 0.3 & 0.1 & 11.2 & \textbf{0.0} \\
 & Intentionanalysis & \textbf{0.0} & 0.9 & 0.8 & 0.3 & \textbf{0.0} & 10.0 &\textbf{0.0}\\
 & ICD & \textbf{0.0} & 0.7 & 0.9 & 0.4 & 0.4 & 9.9 &\textbf{0.0}\\
 & RTT3d & 0.3 & 0.2 & 0.8 & 0.2 & 0.7 & 7.6 &\textbf{0.0}\\
 & \cellcolor[gray]{0.85}\textbf{CurvaLID} 
                    & \cellcolor[gray]{0.85}\textbf{0.0}
                    & \cellcolor[gray]{0.85}\textbf{0.0}
                    & \cellcolor[gray]{0.85}\textbf{0.0}
                    & \cellcolor[gray]{0.85}\textbf{0.0} 
                    & \cellcolor[gray]{0.85}\textbf{0.0}
                    & \cellcolor[gray]{0.85}\textbf{0.0}
                    & \cellcolor[gray]{0.85}\textbf{0.0}\\ \midrule

\multirow{7}{*}{\textbf{PaLM2}} 
 & No defence & 14.9 & 98.0 & 49.7 & 88.9 & 55.1 & 18.9 & 91.9 \\ \cmidrule{2-9}
 & SmoothLLM & 5.5 & 38.7 & 6.7 & 7.2 & 41.2 & 9.8 & 45.3 \\
 & Self-Reminder & 2.3 & 36.7 & 22.3 & 4.7 & 21.4 & 13.3 & 3.7 \\
 & Intentionanalysis &\textbf{0.0}& 2.3 & 1.3 & 0.9 &\textbf{0.0}& 9.7 &\textbf{0.0}\\
 & ICD & 0.1 & 4.9 & 34.2 & 0.2 & 33.9 & 9.3 &\textbf{0.0}\\
 & RTT3d & 0.1 & 0.1 & 25.5 & 3.3 & 25.0 & 10.2 & 2.8 \\
 & \cellcolor[gray]{0.85}\textbf{CurvaLID} 
                    & \cellcolor[gray]{0.85}\textbf{0.0}
                    & \cellcolor[gray]{0.85}\textbf{0.0}
                    & \cellcolor[gray]{0.85}\textbf{0.0}
                    & \cellcolor[gray]{0.85}\textbf{0.0} 
                    & \cellcolor[gray]{0.85}\textbf{0.0}
                    & \cellcolor[gray]{0.85}\textbf{0.0}
                    & \cellcolor[gray]{0.85}\textbf{0.0}\\ 
\bottomrule
\end{tabular}
\end{adjustbox}
\label{tab:FULL_DEF}
\end{table}

We also evaluated constrained SFT\citep{qi2025safety}, using the fine-tuned Gemma-2-9B model released by the authors on GitHub and HuggingFace. The experimental results are shown in Table \ref{tab:gemma_curvalid_comparison}. We observe that constrained SFT is effective in mitigating attacks that rely on gibberish prefixes or suffixes, such as GCG and AmpleGCG, but fails to defend against social-engineering-based attacks like PAIR, DAN, and SAP. Most importantly, CurvaLID outperforms the constrained SFT defence across all adversarial attack types.

\begin{table}[hbt!]
\caption{Comparison of defences on Gemma-2-9B, measured by ASR (\%).}
\centering
\begin{adjustbox}{width=0.9\linewidth}
\begin{tabular}{@{}ccccccccc@{}}
\toprule
\textbf{LLM} & \textbf{defence} & \textbf{GCG} & \textbf{PAIR} & \textbf{DAN} & \textbf{AmpleGCG} & \textbf{SAP} & \textbf{MathAttack} & \textbf{RandomSearch} \\ \midrule

\multirow{3}{*}{\textbf{Gemma-2-9B}} 
 & No defence & 90.2 & 23.8 & 80.0 & 81.3 & 44.5 & 22.8 & 94.5 \\ \cmidrule{2-9}
 & Constrained SFT & 22.5 & 28.5 & 83.5 & 29.2 & 78.8 & 22.0 & 90.0 \\
 & \cellcolor[gray]{0.85}\textbf{CurvaLID} 
   & \cellcolor[gray]{0.85}\textbf{0.0} 
   & \cellcolor[gray]{0.85}\textbf{0.0} 
   & \cellcolor[gray]{0.85}\textbf{0.0} 
   & \cellcolor[gray]{0.85}\textbf{2.1} 
   & \cellcolor[gray]{0.85}\textbf{0.0} 
   & \cellcolor[gray]{0.85}\textbf{0.0} 
   & \cellcolor[gray]{0.85}\textbf{0.0} \\

\bottomrule
\end{tabular}
\end{adjustbox}
\label{tab:gemma_curvalid_comparison}
\end{table}

\subsubsection{Reported performance of existing defences}
\label{sec:reported_defences}

We summarize the reported performances of other SOTA defences cited from their respective papers across different LLMs, focusing on their ability to reduce the ASR of adversarial prompts and compare this to CurvaLID's unified performance. CurvaLID outperforms the studied defences and maintains consistent performance across all LLMs. While we primarily compare four key defences in this subsection—SmoothLLM \citep{robey2023smoothllm}, Intentionanalysis \citep{zhang2024intention}, RTT3d \citep{yung2024round}, and LAT \citep{sheshadri2024targeted}—which are considered SOTA or represent some of the most recent developments, we also experimented with other defences like Gradient Cuff \citep{hu2024gradient}, SELFDEFEND \citep{wang2024selfdefend}, SafeDecoding \citep{xu2024safedecoding}, Circuit Breakers \citep{zou2024improving}, Llama Guard \citep{inan2023llama}, and perplexity-based filtering \citep{alon2023detecting}. 

Given the computational and time constraints associated with replicating results and testing across multiple LLMs, we have cited the performance figures for these defences from their respective papers. This approach ensures fairness, as different studies report varying results for these defences when replicating them against different adversarial prompts and across different models. Therefore, we rely on the original reported performances to provide a balanced and consistent comparison. Note that since the results for these defences are primarily based on English adversarial prompts in their respective papers, our analysis here is focused solely on English prompts.

defences based on input perturbation demonstrate mixed results depending on the LLM and the nature of the adversarial attack. For instance, Intentionanalysis reduces the ASR to between 0.03\% and 8.34\%, but it struggles with models like Vicuna-7B and MPT-30B-Chat, where the ASR for the SAP attack can reach nearly 20\%. Similarly, while SmoothLLM can reduce the ASR to nearly 0\% in various LLMs, it fails against PAIR attacks, showing ASRs of 46\% in Vicuna-13B and 24\% in GPT-4. RTT3d, as the first defence against MathAttack, managed to mitigate 40\% of MathAttack in GPT4, but it failed to reduce the ASR to under 10\%. LAT achieves near-zero ASR for models like Llama2-7B-Chat and Llama3-8B-Instruct. However, its reliance on a white-box setting and its testing on models with fewer than 10 billion parameters limit its broader applicability.

In the remainder of this subsection, we present the defensive performance of the SOTA defences. The performance figures for all defences are directly cited from their original papers due to the computational and time constraints involved in replicating results and testing across various LLMs. This approach ensures consistency and fairness in comparison, as the results reported by different studies often vary when replicating these defences on different adversarial prompts and models. By relying on the figures from the original sources, we aim to provide an accurate and balanced reflection of each defence's performance.

The following are the experimental settings and performances of the seven defences we studied. Note that it includes various LLMs, namely Vicuna \citep{chiang2023vicuna}, Llama-2 \citep{touvron2023llama}, Llama-3 \citep{llama3modelcard}, GPT-3.5 \citep{brown2020language}, GPT-4 \citep{achiam2023gpt}, PaLM2 \citep{anil2023palm}, Claude-1 \citep{anthropic2023claude1}, Claude-2 \citep{anthropic2023claude2}, ChatGLM-6B \citep{zeng2022glm}, MPT-30B-Chat \citep{MosaicML2023Introducing}, DeepSeek-67B-Chatand \citep{bi2024deepseek}. It also includes various adversarial prompts, namely, GCG \citep{zou2023universal}, PAIR \citep{chao2023jailbreaking}, DAN \citep{shen2023anything}, AmpleGCG \citep{liao2024amplegcg}, SAP \citep{deng2023attack}, MathAttack \citep{zhou2024mathattack}, RandomSearch \citep{andriushchenko2024jailbreaking}, Prefill \citep{haizelabs2023llama3}, Many-Shot \citep{anil2024many}, AutoDAN \citep{liu2023autodan}, TAP \citep{mehrotra2023tree}, Jailbroken \citep{wei2024jailbroken}, LRL \citep{yong2023low}, 
DrAttack \citep{li2024drattack}, Puzzler \citep{chang2024play}, MultiJail \citep{deng2023multilingual}, DeepInception \citep{li2023deepinception}, and Template \citep{yu2023gptfuzzer}.

\textbf{SmoothLLM} Detailed experimental settings are referred to \citep{robey2023smoothllm}. The experimental results are shown in Table \ref{smoothDEF}.

\begin{table}[!hbt]
\caption{ASR comparison of different LLMs against adversarial prompts under SmoothLLM defence. Results are directly cited from the original paper. A dash (-) indicates that experiments were not conducted for this setting in the original paper.}
\centering
\begin{adjustbox}{width=0.6\linewidth}
\begin{tabular}{@{}c|cccc@{}}
\toprule
\textbf{LLM}    & \multicolumn{4}{c}{\textbf{Adversarial Prompt}}                                 \\ \midrule
 & \multicolumn{1}{c|}{\textbf{GCG}} & \multicolumn{1}{c|}{\textbf{PAIR}} & \multicolumn{1}{c|}{\textbf{RandomSearch}} & \textbf{AmpleGCG} \\ \midrule
Vicuna-13B-v1.5 & \multicolumn{1}{c|}{0.8} & \multicolumn{1}{c|}{46} & \multicolumn{1}{c|}{44} & 2 \\ \midrule
Llama-2-7B-chat & \multicolumn{1}{c|}{0.1} & \multicolumn{1}{c|}{8}  & \multicolumn{1}{c|}{0}  & 0 \\ \midrule
GPT-3.5         & \multicolumn{1}{c|}{0.8} & \multicolumn{1}{c|}{2}  & \multicolumn{1}{c|}{0}  & 0 \\ \midrule
GPT-4           & \multicolumn{1}{c|}{0.8} & \multicolumn{1}{c|}{24} & \multicolumn{1}{c|}{0}  & 0 \\ \midrule
PaLM-2          & \multicolumn{1}{c|}{0.9} & \multicolumn{1}{c|}{-}  & \multicolumn{1}{c|}{-}  & - \\ \midrule
Claude-1        & \multicolumn{1}{c|}{0.3} & \multicolumn{1}{c|}{-}  & \multicolumn{1}{c|}{-}  & - \\ \midrule
Claude-2        & \multicolumn{1}{c|}{0.3} & \multicolumn{1}{c|}{-}  & \multicolumn{1}{c|}{-}  & - \\ \bottomrule
\end{tabular}
\label{smoothDEF}
\end{adjustbox}
\end{table}

\textbf{Latent Adversarial Training} Detailed experimental settings are referred to \citep{sheshadri2024targeted}. The experimental results are shown in Table \ref{LATDef}.

\begin{table}[!hbt]
\caption{ASR comparison of different LLMs against adversarial prompts under LAT defence. Results are directly cited from the original paper.}
\centering
\begin{adjustbox}{width=0.6\linewidth}
\begin{tabular}{@{}c|ccccc@{}}
\toprule
\textbf{LLM}                & \multicolumn{5}{c}{\textbf{Adversarial Prompt}}                                                                        \\ \midrule
 &
  \multicolumn{1}{c|}{\textbf{PAIR}} &
  \multicolumn{1}{c|}{\textbf{Prefill}} &
  \multicolumn{1}{c|}{\textbf{AutoPrompt}} &
  \multicolumn{1}{c|}{\textbf{GCG}} &
  \textbf{Many-Shot} \\ \midrule
Llama2-7B-chat     & \multicolumn{1}{c|}{0.025}  & \multicolumn{1}{c|}{0.029}  & \multicolumn{1}{c|}{0.006} & \multicolumn{1}{c|}{0.007} & 0 \\ \midrule
Llama3-8B-instruct & \multicolumn{1}{c|}{0.0033} & \multicolumn{1}{c|}{0.0068} & \multicolumn{1}{c|}{0}     & \multicolumn{1}{c|}{0.009} & 0 \\ \bottomrule
\end{tabular}
\label{LATDef}
\end{adjustbox}
\end{table}

\textbf{Gradient Cuff} Detailed experimental settings are referred to \citep{hu2024gradient}. The experimental results are shown in Table \ref{GCDEF}.

\begin{table}[!hbt]
\caption{ASR comparison of different LLMs against adversarial prompts under Gradient Cuff defence. Results are directly cited from the original paper.}
\centering
\begin{adjustbox}{width=0.6\linewidth}
\begin{tabular}{@{}c|cccccc@{}}
\toprule
\textbf{LLM} &
  \multicolumn{6}{c}{\textbf{Adversarial Prompt}} \\ \midrule
 &
  \multicolumn{1}{c|}{\textbf{GCG}} &
  \multicolumn{1}{c|}{\textbf{AutoDAN}} &
  \multicolumn{1}{c|}{\textbf{PAIR}} &
  \multicolumn{1}{c|}{\textbf{TAP}} &
  \multicolumn{1}{c|}{\textbf{Base64}} &
  \textbf{LRL} \\ \midrule
Llama2-7B-chat &
  \multicolumn{1}{c|}{0.012} &
  \multicolumn{1}{c|}{0.158} &
  \multicolumn{1}{c|}{0.23} &
  \multicolumn{1}{c|}{0.05} &
  \multicolumn{1}{c|}{0.198} &
  0.054 \\ \midrule
Vicuna-7B-v1.5 &
  \multicolumn{1}{c|}{0.108} &
  \multicolumn{1}{c|}{0.508} &
  \multicolumn{1}{c|}{0.306} &
  \multicolumn{1}{c|}{0.354} &
  \multicolumn{1}{c|}{0} &
  0.189 \\ \bottomrule
\end{tabular}
\label{GCDEF}
\end{adjustbox}
\end{table}

\textbf{Intentionanalysis} Detailed experimental settings are referred to \citep{zhang2024intention}. The experimental results are shown in Table \ref{IADEEF}.

\begin{table}[!hbt]
\caption{ASR comparison of different LLMs against adversarial prompts under Intentionanalysis defence. Results are directly cited from the original paper. A dash (-) indicates that experiments were not conducted for this setting in the original paper.}
\centering
\begin{adjustbox}{width=0.6\linewidth}
\begin{tabular}{@{}c|ccccc@{}}
\toprule
\textbf{LLM}               & \multicolumn{5}{c}{\textbf{Adversarial Prompt}}                                                                  \\ \midrule
 &
  \multicolumn{1}{c|}{\textbf{DAN}} &
  \multicolumn{1}{c|}{\textbf{SAP200}} &
  \multicolumn{1}{c|}{\textbf{DeepInception}} &
  \multicolumn{1}{c|}{\textbf{GCG}} &
  \textbf{AutoDAN} \\ \midrule
ChatGLM-6B        & \multicolumn{1}{c|}{5.48} & \multicolumn{1}{c|}{6.12} & \multicolumn{1}{c|}{0}    & \multicolumn{1}{c|}{1} & 2    \\ \midrule
LLaMA2-7B-Chat    & \multicolumn{1}{c|}{0.13} & \multicolumn{1}{c|}{0}    & \multicolumn{1}{c|}{0}    & \multicolumn{1}{c|}{0} & 0    \\ \midrule
Vicuna-7B-v1.1    & \multicolumn{1}{c|}{3.42} & \multicolumn{1}{c|}{0.31} & \multicolumn{1}{c|}{0}    & \multicolumn{1}{c|}{0} & 10.5 \\ \midrule
Vicuna-13B-v1.1   & \multicolumn{1}{c|}{0.94} & \multicolumn{1}{c|}{1.12} & \multicolumn{1}{c|}{0}    & \multicolumn{1}{c|}{0} & 3.5  \\ \midrule
MPT-30B-Chat      & \multicolumn{1}{c|}{5.38} & \multicolumn{1}{c|}{19.2} & \multicolumn{1}{c|}{4.78} & \multicolumn{1}{c|}{4} & -    \\ \midrule
DeepSeek-67B-Chat & \multicolumn{1}{c|}{3.78} & \multicolumn{1}{c|}{1.56} & \multicolumn{1}{c|}{7.57} & \multicolumn{1}{c|}{2} & -    \\ \midrule
GPT-3.5           & \multicolumn{1}{c|}{0.64} & \multicolumn{1}{c|}{0}    & \multicolumn{1}{c|}{0}    & \multicolumn{1}{c|}{0} & -    \\ \bottomrule
\end{tabular}
\label{IADEEF}
\end{adjustbox}
\end{table}

\textbf{SELFDEFEND} Detailed experimental settings are referred to \citep{wang2024selfdefend}. The experimental results are shown in Table \ref{SDDEF}.

\begin{table}[!hbt]
\caption{ASR comparison of different LLMs against adversarial prompts under SELFDEFEND defence. Results are directly cited from the original paper.}
\centering
\begin{adjustbox}{width=0.6\linewidth}
\begin{tabular}{@{}c|cccccccc@{}}
\toprule
\textbf{LLM} &
  \multicolumn{8}{c}{\textbf{Adversarial Prompt}} \\ \midrule
 &
  \multicolumn{1}{c|}{\textbf{DAN}} &
  \multicolumn{1}{c|}{\textbf{GCG}} &
  \multicolumn{1}{c|}{\textbf{AutoDAN}} &
  \multicolumn{1}{c|}{\textbf{PAIR}} &
  \multicolumn{1}{c|}{\textbf{TAP}} &
  \multicolumn{1}{c|}{\textbf{DrAttack}} &
  \multicolumn{1}{c|}{\textbf{Puzzler}} &
  \textbf{MultiJail} \\ \midrule
GPT-3.5 &
  \multicolumn{1}{c|}{0.007} &
  \multicolumn{1}{c|}{0.18} &
  \multicolumn{1}{c|}{0.31} &
  \multicolumn{1}{c|}{0.29} &
  \multicolumn{1}{c|}{0.02} &
  \multicolumn{1}{c|}{0.71} &
  \multicolumn{1}{c|}{0.22} &
  0.203 \\ \midrule
GPT-4 &
  \multicolumn{1}{c|}{0.002} &
  \multicolumn{1}{c|}{0} &
  \multicolumn{1}{c|}{0.01} &
  \multicolumn{1}{c|}{0.1} &
  \multicolumn{1}{c|}{0.08} &
  \multicolumn{1}{c|}{0.04} &
  \multicolumn{1}{c|}{0.26} &
  0.012 \\ \bottomrule
\end{tabular}
\label{SDDEF}
\end{adjustbox}
\end{table}

\textbf{RTT3d} Detailed experimental settings are referred to \citep{yung2024round}. The experimental results are shown in Table \ref{RTTDEF}.

\begin{table}[!hbt]
\caption{ASR comparison of different LLMs against adversarial prompts under RTT3d defence. Results are directly cited from the original paper. A dash (-) indicates that experiments were not conducted for this setting in the original paper.}
\centering
\begin{adjustbox}{width=0.5\linewidth}
\begin{tabular}{@{}c|cccc@{}}
\toprule
\textbf{LLM}              & \multicolumn{4}{c}{\textbf{Adversarial Prompt}}                                         \\ \midrule
 & \multicolumn{1}{c|}{\textbf{PAIR}} & \multicolumn{1}{c|}{\textbf{GCG}} & \multicolumn{1}{c|}{\textbf{SAP}} & \textbf{MathAttack} \\ \midrule
GPT-3.5          & \multicolumn{1}{c|}{-}     & \multicolumn{1}{c|}{-}    & \multicolumn{1}{c|}{0.06} & 9.8 \\ \midrule
GPT-4            & \multicolumn{1}{c|}{0.265} & \multicolumn{1}{c|}{-}    & \multicolumn{1}{c|}{-}    & -   \\ \midrule
Llama-2-13B-Chat & \multicolumn{1}{c|}{0.043} & \multicolumn{1}{c|}{0.17} & \multicolumn{1}{c|}{-}    & -   \\ \midrule
Vicuna-13B-v1.5  & \multicolumn{1}{c|}{0.26}  & \multicolumn{1}{c|}{0.15} & \multicolumn{1}{c|}{-}    & -   \\ \midrule
PaLM-2           & \multicolumn{1}{c|}{0.13}  & \multicolumn{1}{c|}{-}    & \multicolumn{1}{c|}{-}    & -   \\ \bottomrule
\end{tabular}
\label{RTTDEF}
\end{adjustbox}
\end{table}

\textbf{SafeDecoding} Detailed experimental settings are referred to \citep{xu2024safedecoding}. The experimental results are shown in Table \ref{SDecodeDEF}.

\begin{table}[!hbt]
\caption{ASR comparison of different LLMs against adversarial prompts under SafeDecoding defence. Results are directly cited from the original paper.}
\centering
\begin{adjustbox}{width=0.7\linewidth}
\begin{tabular}{@{}c|cccccc@{}}
\toprule
\textbf{LLM}            & \multicolumn{6}{c}{\textbf{Adversarial Prompt}}                                                                                           \\ \midrule
 &
  \multicolumn{1}{c|}{\textbf{GCG}} &
  \multicolumn{1}{c|}{\textbf{AutoDAN}} &
  \multicolumn{1}{c|}{\textbf{PAIR}} &
  \multicolumn{1}{c|}{\textbf{DeepInception}} &
  \multicolumn{1}{c|}{\textbf{SAP30}} &
  \textbf{Template} \\ \midrule
Vicuna-7B      & \multicolumn{1}{c|}{0.04} & \multicolumn{1}{c|}{0} & \multicolumn{1}{c|}{0.04} & \multicolumn{1}{c|}{0} & \multicolumn{1}{c|}{0.09} & 0.05 \\ \midrule
Llama2-7B-Chat & \multicolumn{1}{c|}{0}    & \multicolumn{1}{c|}{0} & \multicolumn{1}{c|}{0.04} & \multicolumn{1}{c|}{0} & \multicolumn{1}{c|}{0}    & 0    \\ \bottomrule
\end{tabular}
\label{SDecodeDEF}
\end{adjustbox}
\end{table}

\textbf{Circuit Breakers} Detailed experimental settings are referred to \citep{zou2024improving}. The experimental results are shown in Table \ref{CBDef}.

\begin{table}[!hbt]
\caption{Comparison between CurvaLID and Circuit Breakers on LLaMA-3-8B. Results are reported in terms of ASR (\%).}
\centering
\renewcommand{\arraystretch}{1.5} 
\begin{adjustbox}{width=0.7\linewidth}
\begin{tabular}{@{}c|c|cccccc@{}}
\toprule
\textbf{LLM} & \textbf{defence} & \multicolumn{6}{c}{\textbf{Adversarial Prompt}} \\ \midrule
             &                  & \multicolumn{1}{c|}{\textbf{GCG}} & 
                                \multicolumn{1}{c|}{\textbf{PAIR}} & 
                                \multicolumn{1}{c|}{\textbf{DAN}} & 
                                \multicolumn{1}{c|}{\textbf{AmpleGCG}} & 
                                \multicolumn{1}{c|}{\textbf{SAP}} & 
                                \textbf{RandomSearch} \\ \midrule
\multirow{2}{*}{LLaMA-3-8B} 
             & Circuit Breakers & \multicolumn{1}{c|}{2}    & \multicolumn{1}{c|}{3.33} & \multicolumn{1}{c|}{0} & \multicolumn{1}{c|}{2.5} & \multicolumn{1}{c|}{0} & 0 \\ \cline{2-8}
             & CurvaLID         & \multicolumn{1}{c|}{0}    & \multicolumn{1}{c|}{0}    & \multicolumn{1}{c|}{0} & \multicolumn{1}{c|}{2.5} & \multicolumn{1}{c|}{0} & 0 \\ \bottomrule
\end{tabular}
\label{CBDef}
\end{adjustbox}
\end{table}

\textbf{Llama Guard} Detailed experimental settings are referred to \citep{inan2023llama}. The experimental results are shown in Table \ref{LGDef}.

\begin{table}[!hbt]
\caption{Comparison of LLaMA Guard 3 and CurvaLID. Results are reported in terms of ASR (\%).}
\centering
\renewcommand{\arraystretch}{1.3} 
\begin{adjustbox}{width=0.8\linewidth}
\begin{tabular}{c|c|c|c|c|c|c|c|c}
\toprule
\textbf{defence} & \textbf{PAIR} & \textbf{DAN} & \textbf{SAP} & \textbf{AutoDAN} & \textbf{GCG} & \textbf{AmpleGCG} & \textbf{MMLU} & \textbf{AlpacaEval} \\ \midrule
LLaMA Guard 3 & 0 & 0 & 0 & 0 & 0 & 0 & 0.09 & 0.01 \\ \cline{1-9}
CurvaLID      & 0 & 0 & 0 & 0 & 0 & 0.025 & 0 & 0.02 \\ \bottomrule
\end{tabular}
\label{LGDef}
\end{adjustbox}
\end{table}

\textbf{Perplexity-based filtering} Detailed experimental settings are referred to \citep{alon2023detecting}. The experimental results are shown in Table \ref{PBDef}.

\begin{table}[!hbt]
\caption{Comparison of Perplexity Filtering and CurvaLID. Results are reported in terms of ASR (\%).}
\centering
\renewcommand{\arraystretch}{1.3} 
\begin{adjustbox}{width=0.9\linewidth}
\begin{tabular}{c|c|c|c|c|c|c|c}
\toprule
defence & GCG & AmpleGCG & DAN & AutoDAN & SAP & PAIR & Persuasive Attack \\ \midrule
Perplexity Filtering & 0 & 0 & 0.475 & 0.38 & 1 & 0.624 & 0.76 \\ \cline{1-8}
CurvaLID             & 0 & 0.015 & 0 & 0 & 0 & 0 & 0 \\ \bottomrule
\end{tabular}
\label{PBDef}
\end{adjustbox}
\end{table}

\section{LLM Usage}

This research directly concerns LLMs, and all experiments necessarily involved their usage. In addition, we used LLMs in a limited capacity to aid and polish the writing of this paper.

\end{document}